\newcommand{\D}{\mathrm{D}}
\newcommand{\KL}{\mathrm{KL}}
\newcommand{\BE}{\mathbb{E}}
\newcommand{\mc}{\mathcal}
\newtheorem{theorem}{Theorem}
\newtheorem{lemma}{Lemma}
\newtheorem{assumption}{Assumption}
\newtheorem{definition}{Definition}
\newtheorem{example}{Example}
\newcommand{\norm}[1]{\left\lVert#1\right\rVert}
\icmltitlerunning{Offline Imitation from Observations, Mismatched Experts, and Examples}
\newcommand{\para}[1]{\textbf{#1}}
\begin{document}

\twocolumn[
\icmltitle{Versatile Offline Imitation from Observations and Examples via\\ Regularized State-Occupancy Matching}

\begin{icmlauthorlist}
\icmlauthor{Yecheng Jason Ma}{penn}
\icmlauthor{Andrew Shen}{melbourne}
\icmlauthor{Dinesh Jayaraman}{penn}
\icmlauthor{Osbert Bastani}{penn}
\end{icmlauthorlist}

\icmlaffiliation{penn}{Department of Computer and Information Science, University of Pennsylvania, Philadelphia, USA}
\icmlaffiliation{melbourne}{University of Melbourne, Melbourne, Australia}

\icmlcorrespondingauthor{Yecheng Jason Ma}{jasonyma@seas.upenn.edu}

\icmlkeywords{Machine Learning, ICML}

\vskip 0.3in
]

\printAffiliationsAndNotice{}  %

\begin{abstract}
We propose \textbf{S}tate \textbf{M}atching \textbf{O}ffline \textbf{DI}stribution \textbf{C}orrection \textbf{E}stimation (SMODICE), a novel and versatile regression-based offline imitation learning (IL) algorithm derived via state-occupancy matching. We show that the SMODICE objective admits a simple optimization procedure through an application of Fenchel duality and an analytic solution in tabular MDPs. Without requiring access to expert actions, SMODICE can be effectively applied to three offline IL settings: (i) imitation from observations (IfO), (ii) IfO with dynamics or morphologically mismatched expert, and (iii) example-based reinforcement learning, which we show can be formulated as a state-occupancy matching problem. We extensively evaluate SMODICE on both gridworld environments as well as on high-dimensional offline benchmarks. Our results demonstrate that SMODICE is effective for all three problem settings and significantly outperforms prior state-of-art. Project website: \href{https://sites.google.com/view/smodice/home}{https://sites.google.com/view/smodice/home}
\end{abstract}

\section{Introduction}

The offline reinforcement learning (RL) framework ~\cite{lange2012batch, levine2020offline} aims to use pre-collected, reusable offline data---without further interaction with the environment---for sample-efficient, scalable, and practical data-driven decision-making. 
However, this %
assumes that the offline dataset comes with reward labels, which may not always be possible.
To address this, offline \textit{imitation} learning (IL)~\cite{zolna2020offline, chang2021mitigating, kim2022demodice} has recently been proposed as an alternative where the learning algorithm is 
provided with a small set of expert demonstrations and a separate set of offline data of unknown quality. The goal is to learn a policy that mimics the provided expert data while avoiding test-time distribution shift~\cite{ross2011reduction} by using the offline dataset.

Expert demonstrations are often much more expensive to acquire than offline data; thus, offline IL benefits significantly from minimizing assumptions about the expert data. 
In this work, we aim to remove two assumptions about the expert data in 
current offline IL algorithms: (i) expert action labels must be provided for the demonstrations, and (ii) the expert demonstrations are performed with identical dynamics (same embodiment, actions, and transitions) as the imitator agent.
These requirements preclude applications to important practical problem settings, including (i) imitation from observations, (ii) imitation with mismatched expert that obeys different dynamics or embodiment (e.g., learning from human videos), and (iii) learning only from examples of successful outcomes rather than full expert trajectories~\cite{eysenbach2021replacing}. 
\begin{figure*}[t!]
\centering
\includegraphics[width=\textwidth]{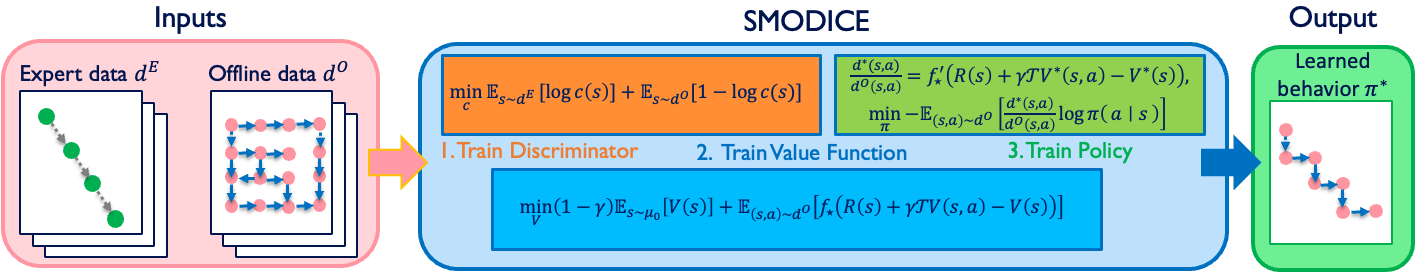}
\vspace{-0.3cm}
\caption{Diagram of SMODICE. First, a state-based discriminator is trained using the offline dataset $d^O$ and expert observations (resp. examples) $d^E$. Then, the discriminator is used to train the Lagrangian value function. Finally, the value function provides the importance weights for policy training, which outputs the learned policy $d^*$.}
\vspace{-0.5cm}
\label{figure:smodice-concept-figure}
\end{figure*}

For these reasons, many algorithms for \textit{online} IL have already sought to remove these assumptions~\cite{torabi2018behavioral, torabi2019generative, liu2019state, radosavovic2020stateonly, eysenbach2021replacing}, but extending them to offline IL remains an open problem. 

We propose \textbf{S}tate \textbf{M}atching \textbf{O}ffline \textbf{DI}stribution \textbf{C}orrection \textbf{E}stimation (SMODICE), a general offline IL framework that can be applied to all three problem settings described above. At a high level, SMODICE is based on a state-occupancy matching view of IL:
\begin{equation}
    \label{eq:state-f-divergence-intro}
    \min_\pi \mathrm{D}_\KL(d^\pi(s) \| d^E(s)),
\end{equation}
which aims to minimize the KL-divergence of the state-occupancy $d$ between the imitator $\pi$ and the expert $E$. This state-occupancy matching objective intuitively demands inferring the correct actions from the offline data in order to match the state-occupancy of the provided expert demonstrations. 
This formulation naturally enables imitation when expert actions are unavailable, and even when the expert's embodiment or dynamics are different, as long as there is a shared task-relevant state.
Finally, we show that example-based RL~\cite{eysenbach2021replacing}, where only examples of successful states are provided as supervision, can be formulated as a state-occupancy matching problem between the imitator and a ``teleporting'' expert that is able to reach success states in one step. Hence, SMODICE can also be used as an offline example-based RL\footnote{We refer to this problem as ``offline imitation learning from examples'' to unify nomenclature with the other two problems.} method without any modification.

Despite its generality, naively optimizing the state-occupancy matching objective would result in an actor-critic style IL algorithm akin to prior work~\cite{ho2016generative, kostrikov2018discriminator, Kostrikov2020Imitation}; however, these algorithms suffer from training instability in the offline regime~\cite{kumar2019stabilizing, lee2021optidice, kim2022demodice} due to the entangled nature of actor and critic learning, leading to erroneous value bootstrapping~\cite{levine2020offline}. SMODICE bypasses this issue by first introducing a $f$-divergence \textit{regularized} state-matching objective and then using its dual optimal solution to formulate a weighted regression policy objective that amounts to behavior cloning of the optimal policy.
Specifically, leveraging the notion of Fenchel conjugacy~\cite{Rockafellar+2015, nachum2020reinforcement}, SMODICE reduces the dual problem of the proposed regularized state-occupancy matching problem to an unconstrained convex optimization problem over a value function (Step 2 in Figure~\ref{figure:smodice-concept-figure}). This unconstrained problem admits closed-form solutions in the tabular case and can be easily optimized using stochastic gradient descent (SGD) in the deep RL setting.
 Then, without any additional learning step, applying Fenchel duality to the optimal value function directly obtains the optimal \textit{primal} solution, which recovers the optimal importance weights for weighted regression (Step 3 in Figure~\ref{figure:smodice-concept-figure}). Note that SMODICE does not optimize this policy objective until the value function has converged; despite forgoing direct minimization of the state-matching objective, this uninterleaved optimization is favorable in the offline setting due to its much improved training stability.

Through extensive experiments, we show that SMODICE is effective for all three problem settings we consider and outperforms all state-of-art methods in each respective setting. We obtain all SMODICE results using a \textit{single} set of hyperparameters, modulo a choice of $f$-divergence which can be tuned \textit{offline}. In contrast, prior methods suffer from much greater performance fluctuation across tasks and settings, validating the stated stability improvement of our optimization approach. Altogether, our proposed method SMODICE can serve as a versatile offline IL algorithm that is suitable for a wide range of assumptions on expert data. 

In summary, our contributions are: (i) SMODICE: a simple, stable, and versatile state-occupancy matching based offline IL algorithm for both tabular and high-dimensional continuous MDPs, (ii) a reduction of example-based reinforcement learning to state-occupancy matcjomg, and (iii) extensive experimental analysis of SMODICE in offline imitation from observations, mismatched experts, and examples; in all three, SMODICE outperforms competing methods.

\begin{figure}[t!]
\centering
\label{figure:illustrative_example}
\subfigure[Mismatched experts]{\label{fig:tabular_diagonal}\includegraphics[width=0.45\columnwidth]{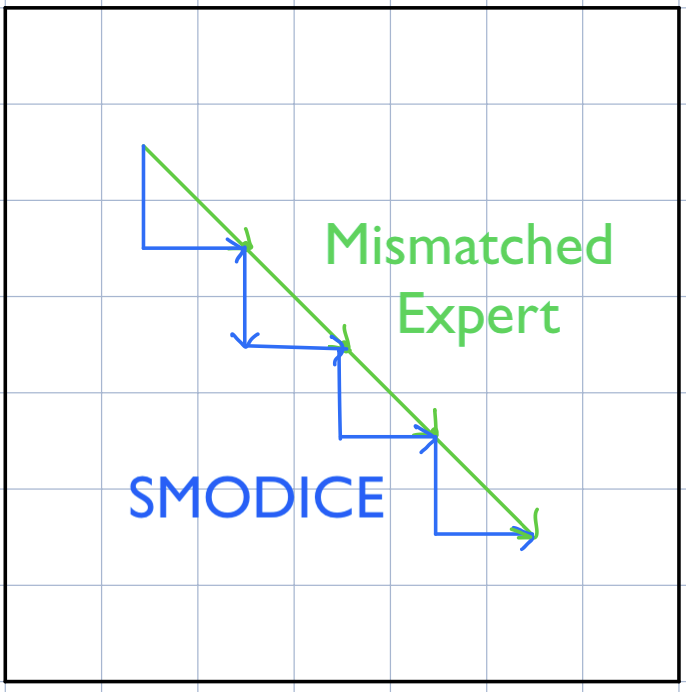}}
\subfigure[Offline IL from examples]{\label{fig:tabular_example}\includegraphics[width=0.455\columnwidth]{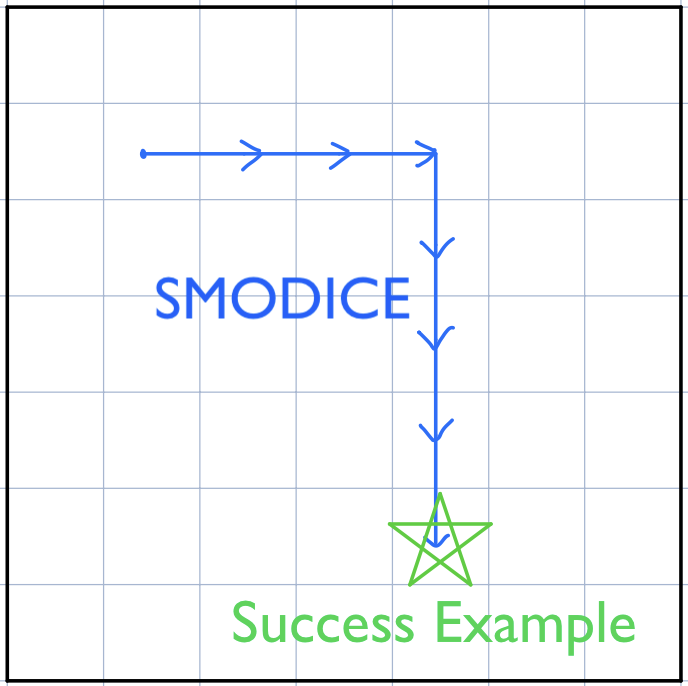}}
\vspace{-0.3cm}
\caption{Illustrations of tabular SMODICE for offline imitation learning from mismatched experts and examples.}
\vspace{-0.5cm}
\end{figure}

\para{Pedagogical examples.} To illustrate SMODICE's versatility, we have applied it to two gridworld tasks, testing offline IL from mismatched experts and examples, respectively. Figure~\ref{fig:tabular_diagonal} shows an expert agent that can move diagonally in any direction, whereas the imitator can only move horizontally or vertically. In Figure~\ref{fig:tabular_example}, only a success state (the star) is provided as supervision. An offline dataset collected by a random agent is given to SMODICE for training in both cases. As shown, SMODICE recovers an optimal policy (i.e. minimum state-occupancy divergence to that of the expert) in both cases. See Appendix~\ref{appendix:tabular-gridworld-experiments} for details. 

\section{Preliminaries}
\para{Markov decision processes.}
We consider a time-discounted Markov decision process (MDP)~\cite{puterman2014markov} $\mc{M}=(S,A,R,T, \mu_0, \gamma)$ with state space $S$, action space $A$, deterministic rewards $R(s,a)$, stochastic transitions $s' \sim T(s,a)$, initial state distribution $\mu_0(s)$, and discount factor $\gamma \in (0, 1]$. A policy $\pi:S \rightarrow \Delta(A)$ determines the action distribution conditioned on the state. 

The state-action occupancies (also known as stationary distribution) $d^\pi(s,a): \mc{S} \times \mc{A} \rightarrow [0,1]$ of $\pi$ is
\begin{equation}
\label{eq:pi-occupancies}
\begin{split}
d^\pi(s,a) \coloneqq \;&(1-\gamma) \sum_{t=0}^{\infty} \gamma^t \text{Pr}(s_t=s, a_t=a \mid  \\ 
&s_0 \sim \mu_0, a_t \sim \pi(s_t), s_{t+1} \sim T(s_t,a_t))
\end{split}
\end{equation}
which captures the relative frequency of state-action visitations for a policy $\pi$. The state occupancies then marginalize over actions: $d^\pi(s) = \sum_a d^\pi(s,a)$. 
The state-action occupancies satisfy the single-step transpose Bellman equation:
\begin{equation}
\label{eq:transpose-bellman-equation}
d^\pi(s,a) = (1-\gamma) \mu_0(s)\pi(a\mid s) + \gamma \cdot \mathcal{T}^\pi_{\star}d^\pi(s,a),
\end{equation}
where $\mathcal{T}^\pi_{\star}$ is the adjoint policy transition operator,
\begin{equation}
\label{eq:adjoint-transition-operator}
\mathcal{T}^\pi_{\star}d^\pi(s,a) \coloneqq \pi(a\mid s) \sum_{\tilde{s}, \tilde{a}}T(s\mid \tilde{s}, \tilde{a})d(\tilde{s}, \tilde{a})
\end{equation}
\para{Divergences and Fenchel conjugates.}
Next, we briefly introduce $f$-divergence and their Fenchel conjugates. 
\begin{definition}[$f$-divergence]
\rm
Given a continuous, convex function $f$ and two probability distributions $p,q \in \Delta(\mc{X})$ over a domain $\mc{X}$, the $f$-divergence of $p$ at $q$ is 
\begin{equation}
\label{eq:f-divergence}
\D_f(p \| q) = \BE_{x \sim  q}\left[f\left(\frac{p(x)}{q(x)}\right)\right]
\end{equation}
\end{definition}

A common $f$-divergence in machine learning is the KL-divergence, which corresponds to $f(x) = x\log x$. Now, we introduce Fenchel conjugate for $f$-divergences.
\begin{definition}[Fenchel conjugate]
\rm
Given a vector space $\Omega$ with inner-product $\langle \cdot, \cdot \rangle$, the \textit{Fenchel conjugate} $f_\star: \Omega_\star \rightarrow \mathbb{R}$ of a convex and differentiable function $f: \Omega \rightarrow \mathbb{R}$ is
\begin{equation}
\label{eq:fenchel-conjugate}
f_\star(y) \coloneqq \max_{x\in \Omega} \langle x, y \rangle - f(x)
\end{equation}
and any maximizer $x^*$ of $f_\star(y)$ satisfies $x^* = f’_\star(y)$.

For an $f$-divergence, under mild realizability assumptions~\cite{dai2016learning} on $f$, the Fenchel conjugate of $D_f(p\|q)$ at $y: \mc{X} \rightarrow \mathbb{R}$ is 
\begin{align}
\label{eq:fenchel-conjugate-f-divergence}
\D_{\star, f}(y) &= \max_{p \in \Delta(\mc{X})} \BE_{x \sim p}[y(x)] - \D_f(p\|q) \\ 
&=\BE_{x \sim q}[f_\star(y(x))]
\end{align}
and any maximizer $p^*$ of $\D_{\star, f}(y)$ satisfies 
\begin{equation}
\label{eq:kkt-conditions}
p^*(x) = q(x)f'_\star(y(x)).
\end{equation} 
This result can be seen as an application of the KKT conditions to problems involving $f$-divergence regularization.
\end{definition} 

\para{Offline imitation learning.}
Many imitation learning approaches rely on minimizing the $f$-divergence between the state-action occupancies of the imitator and the expert~\cite{ho2016generative, ke2020imitation, ghasemipour2019divergence}: 
\begin{equation}
\label{eq:f-divergence-objective}
\min_\pi \D_f\left(d^\pi(s,a) \| d^E(s,a)\right)
\end{equation}
In imitation learning, we do not have $d^E$; instead, we are provided with expert demonstrations $\mc{D}^E \coloneqq \{(s^{(i)}, a^{(i)})\}_{i=1}^N$. 

In offline imitation learning, the agent further cannot interact with the MDP $\mc{M}$; instead, they are given a static dataset of logged transitions $\mc{D}^O \coloneqq \{\tau_i\}_{i=1}^M$, where each trajectory $\tau^{(i)} = (s_0^{(i)},a_0^{(i)},s_1^{(i)},...)$ with $s_0^{(i)} \sim \mu_0$; we denote the empirical state-action occupancies of $\mc{D}^O$ as $d^O(s,a)$.

\section{The SMODICE Algorithm} 
\label{sec:smodice}
In this section, we derive the SMODICE algorithm. We begin by introducing our $f$-divergence regularized offline state-matching objective (Section~\ref{section:f-divergence-regularized-state-matching}). Then, we describe the 3 disjoint training steps of SMODICE in order (Section~\ref{section:smodice-discriminator-training}--\ref{section:smodice-weighted-regression}). Finally, we present SMODICE tailored to tabular MDPs (Section~\ref{section:smodice-tabular-mdps}).

\subsection{$f$-Divergence Regularized State-Matching} 
\label{section:f-divergence-regularized-state-matching}
Recall that the state-occupancy matching objective takes the form
\begin{equation}
    \label{eq:state-f-divergence}
    \min_\pi \mathrm{D}_\KL(d^\pi(s) \| d^E(s)),
\end{equation}
which requires on-policy samples from $\pi$, as the expectation is over $d^\pi$. To enable offline optimization, we necessarily need to involve the offline dataset distribution $d^O$ in our objective.



%
%
%
%
%
%

%
%
First, we assume expert coverage of the offline data:
\begin{assumption}
\label{assumption:expert-coverage}
\rm
$d^O(s) > 0$ whenever $d^E(s) > 0$.
\end{assumption}
This assumption ensures that the offline dataset has coverage over the expert state-marginal, and is necessary for imitation learning to succeed. Whereas prior offline RL approaches~\cite{kumar2020conservative, ma2021conservative} assume full coverage of the state-action space, our assumption\footnote{Furthermore, it is not needed in practice, and is only required for our technical development to ensure that all state-occupancy quantities are well-defined (i.e., no division-by-zero).}  is considerably weaker since it only requires expert coverage.  Given this assumption, we introduce our $f$-divergence regularized state-matching objective, which follows from an upper bound on state-occupancy matching that incorporates the offline dataset distribution $d^O$:
\begin{theorem}
\label{theorem:state-matching-upper-bound}
Given Assumption \ref{assumption:expert-coverage}, we have
\begin{equation}
\label{eq:state-matching-upper-bound}
\begin{split}
&\D_\KL(d^\pi(s)\|d^E(s)) \leq\\
&\BE_{s \sim d^\pi}\left[\log\left(\frac{d^O(s)}{d^E(s)}\right)\right] + \D_\KL(d^\pi(s,a) \| d^O(s,a))
\end{split}
\end{equation}
Furthermore, for any $f$-divergence such that $\D_f \geq \D_\KL$,
\begin{equation}
\label{eq:state-matching-upper-bound-f}
\begin{split}
&\D_\KL(d^\pi(s)\|d^E(s)) \leq\\
&\BE_{s \sim d^\pi}\left[\log\left(\frac{d^O(s)}{d^E(s)}\right)\right] + \D_f(d^\pi(s,a) \| d^O(s,a))
\end{split}
\end{equation}
\end{theorem}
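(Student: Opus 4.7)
The plan is to pivot through $d^O$ by adding and subtracting $\log d^O(s)$ inside the expectation that defines the left-hand side, then collapse the resulting residual via the data-processing inequality. Concretely, writing
\[
\log\frac{d^\pi(s)}{d^E(s)} = \log\frac{d^O(s)}{d^E(s)} + \log\frac{d^\pi(s)}{d^O(s)},
\]
and taking expectations under $d^\pi(s)$ gives the exact identity
\[
\D_\KL(d^\pi(s)\|d^E(s)) = \BE_{s\sim d^\pi}\!\left[\log\frac{d^O(s)}{d^E(s)}\right] + \D_\KL(d^\pi(s)\|d^O(s)).
\]
Assumption~\ref{assumption:expert-coverage} ensures $d^O(s)>0$ on the support of $d^E(s)$, so the contrapositive together with $d^\pi \ll d^E$ (otherwise the LHS is $+\infty$ and the bound is vacuous) gives $d^\pi \ll d^O$, making both terms well-defined.

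Next I would bound the residual $\D_\KL(d^\pi(s)\|d^O(s))$ by its joint counterpart $\D_\KL(d^\pi(s,a)\|d^O(s,a))$. This is the standard data-processing/marginalization inequality for KL: because the state marginal is obtained by summing the joint over actions, and the log-sum inequality (or equivalently the joint convexity of KL) yields
\[
\D_\KL(d^\pi(s)\|d^O(s)) \le \D_\KL(d^\pi(s,a)\|d^O(s,a)).
\]
Substituting this into the identity above produces the first claim \eqref{eq:state-matching-upper-bound}.

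For the $f$-divergence extension \eqref{eq:state-matching-upper-bound-f}, no further work is needed: the hypothesis $\D_f \ge \D_\KL$ applied pointwise to the distribution pair $(d^\pi(s,a), d^O(s,a))$ immediately upgrades the KL residual to the $f$-divergence residual, and the first (importance-weight) term is untouched.

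The only mildly subtle step is the data-processing inequality on the residual; the rest is algebraic rewriting. I would justify that step either by invoking the log-sum inequality termwise in $s$ (treating $\pi(a\mid s)$ and the corresponding conditional induced by $d^O$ as the action-distributions being compared) or by citing joint convexity of KL under the linear marginalization map. The rest of the argument is purely a matter of decomposing the log-ratio and applying the assumed domination $\D_f\ge \D_\KL$.
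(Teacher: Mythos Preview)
Your proposal is correct and follows essentially the same approach as the paper: both arguments multiply and divide by $d^O(s)$ inside the log to get the exact identity $\D_\KL(d^\pi(s)\|d^E(s)) = \BE_{s\sim d^\pi}[\log(d^O(s)/d^E(s))] + \D_\KL(d^\pi(s)\|d^O(s))$, then bound the residual state-KL by the joint state-action KL. The only cosmetic difference is that the paper justifies the last step via a chain-rule decomposition lemma ($\D_\KL(d^\pi(s,a)\|d^O(s,a)) = \D_\KL(d^\pi(s)\|d^O(s)) + \BE_{s\sim d^\pi}\D_\KL(\pi(\cdot|s)\|\pi^O(\cdot|s)) \ge \D_\KL(d^\pi(s)\|d^O(s))$), whereas you invoke the data-processing/log-sum inequality directly---these are equivalent.
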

We refer to the RHS of Equation~\eqref{eq:state-matching-upper-bound-f} as the \textit{$f$-divergence regularized state-occupancy matching objective.} The proofs of this theorem and all other theoretical results are in Appendix~\ref{appendix:proofs}. Intuitively, the upper bound states that that offline state-occupancy matching can be achieved by matching states in the offline data that resemble expert states (the first term) with reward function $R(s) = \log \frac{d^E(s)}{d^O(s)}$ (we describe how to compute this reward below), while remaining in the support of the offline state-action distribution (the second term). Replacing KL-divergence with other $f$-divergences can be useful since the conjugate of KL divergence involves a log-sum-exp, which has been found to be numerically unstable in many RL tasks~\cite{zhu2020off, lee2021optidice, rudner2021on}. Now, we describe the three disjoint steps of SMODICE as presented in Figure~\ref{figure:smodice-concept-figure}.

\subsection{Discriminator training}
\label{section:smodice-discriminator-training}
First, we discuss how to compute $R(s)= \log \frac{d^E(s)}{d^O(s)}$. In the tabular case, $R(s)$ can be computed using empirical estimates of $d^E(s)$ and $d^O(s)$. In the continuous case, we can train a discriminator $c:\mc{S} \rightarrow (0,1)$:
\begin{equation}
    \label{eq:adversarial-training}
    \min_c \BE_{s \sim d^E}\left[\log c(s) \right] + \BE_{s \sim d^O}\left[\log 1-c(s) \right]
\end{equation}
The optimal discriminator is $c^\star(s) = \frac{d^O(s)}{d^E(s)+d^O(s)}$~\cite{goodfellow2014generative}, so we can use $R(s) = -\log \left(\frac{1}{c^\star(s)}-1\right)$. 

\subsection{Dual Value Function Training}
\label{section:smodice-dual-value-function-training}
Note that \eqref{eq:state-matching-upper-bound-f} requires samples from $d^\pi$, so it still cannot be easily optimized without online interaction. To address this, we first rewrite it as an optimization problem over the space of valid state-action occupancies~\cite{puterman2014markov}:
\begin{align}
\label{eq:smodice-problem}
(\mathrm{P})\quad &\max_{d(s,a) \geq 0} \BE_{s \sim d(s,a)}\left[R(s)\right] - \D_f(d \| d^O) \\ 
\label{eq:smodice-bellman-flow-constraint}
&\text{s.t. } \sum_a d(s,a) = (1-\gamma)\mu_0(s) + \gamma \mathcal{T}_\star d(s), \forall s \in S 
\end{align}
where $\mathcal{T}_\star d(s) = \sum_{\bar{s}, \bar{a}}T(s \mid \bar{s}, \bar{a})d(\bar{s}, \bar{a})$; here, \eqref{eq:smodice-bellman-flow-constraint} ensures that $d$ is the occupancy distribution for some policy. We assume that \eqref{eq:smodice-problem} is \textit{strictly feasible}. 
\begin{assumption}
\label{assumption:strict-feasibility}
\rm
There exists at least one $d(s,a)$ such that constraints \eqref{eq:smodice-bellman-flow-constraint} are satisfied and $\forall s \in \mathcal{S}, d(s) > 0$.
\end{assumption}
This assumption is mild and can be satisfied in practice for any MDP for which every state is reachable from the initial state distribution. Next, we can form the dual of \eqref{eq:smodice-problem}:
\begin{equation}
\label{eq:smodice-dual-lagrangian}
\begin{split}
(\mathrm{D})\quad &\max_{d(s,a)\geq 0} \min_{V(s)\geq 0} \BE_{s \sim d}\left[R(s)\right] - \D_f(d \| d^O)\\
&+ \sum_s V(s)\left((1-\gamma)\mu_0(s) + \gamma \mathcal{T}_\star d(s) - \sum_a d(s,a) \right)
\end{split}
\end{equation}
where $V(s)$ are the Lagrangian multipliers. Now, because $\mathcal{T}_\star$ is the adjoint of $\mathcal{T}$, we have the following:
\begin{equation}
\label{eq:adjoint-transform}
\sum_{s}V(s) \cdot \mathcal{T}_{\star}d(s) = \sum_{s,a} d(s,a) \cdot (\mathcal{T}V)(s,a)
\end{equation}
Using this equation, we can write \eqref{eq:smodice-dual-lagrangian} as
\begin{equation}
\label{eq:smodice-dual-lagrangian-2}
\begin{split}
(\mathrm{D})\quad &\max_{d(s,a)\geq 0} \min_{V(s)\geq 0} (1-\gamma)\BE_{s\sim \mu_0}[V(s)]\\
&+ \BE_{(s,a) \sim d}\left[R(s) + \gamma \mathcal{T}V(s,a) - V(s)\right]\\
&- \D_f(d(s,a) \| d^O(s,a)) \\ 
\end{split}
\end{equation}
We note that the original problem \eqref{eq:smodice-problem} is convex~\cite{lee2021optidice}. By Assumption \ref{assumption:strict-feasibility}, it is strictly feasible, so by strong duality, we can change the order of optimization in \eqref{eq:smodice-dual-lagrangian-2}: 
\begin{equation}
    \label{eq:smodice-dual-lagrangian-final}
\begin{split}
    (\mathrm{D})\quad & \min_{V(s)\geq0} \max_{d(s,a)\geq0 } (1-\gamma)\BE_{s\sim \mu_0}[V(s)]\\ 
    &+ \BE_{(s,a) \sim d}\left[\left(R(s) + \gamma \mathcal{T}V(s,a) - V(s)\right)\right]\\ 
    &- \D_f(d(s,a) \| d^O(s,a))
\end{split}
\end{equation}

Finally, using the Fenchel conjugate, \eqref{eq:smodice-dual-lagrangian-final} can be reduced to a single unconstrained optimization problem over $V:\mc{S} \rightarrow \mathbb{R}_{\geq 0}$ that depends on samples from only $d^O$ and not $d$; we also obtain the importance weight of the state-occupancy of the optimal policy with respect to the offline data.
\begin{theorem}
\label{theorem:smodice-dual-lagrangian-fenchel}
The optimization problem \eqref{eq:smodice-dual-lagrangian-final} is equivalent to 
\begin{equation}
   \label{eq:smodice-dual-lagrangian-fenchel}
\begin{split}
(\mathrm{D})\quad &\min_{V(s)\geq0} (1-\gamma)\BE_{s\sim \mu_0}[V(s)]\\
&+ \BE_{(s,a) \sim d^O}\left[f_\star\left(R(s) + \gamma \mathcal{T}V(s,a) - V(s)\right)\right]
\end{split}
\end{equation}
Furthermore, given the optimal solution $V^*$, the optimal state-occupancy importance weights are
\begin{equation}
    \label{eq:smodice-fenchel-solutions}
    \frac{d^*(s,a)}{d^O(s,a)} = f'_\star(R(s)+\gamma \mathcal{T}V^*(s,a)-V^*(s))
\end{equation}
\end{theorem}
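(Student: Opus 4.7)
The plan is to fix $V$ and solve the inner maximization over $d(s,a)\geq 0$ in closed form, recognizing it as a Fenchel conjugate of $\D_f(\cdot\|d^O)$ evaluated at the Bellman-residual-like quantity $y(s,a) := R(s) + \gamma \mathcal{T}V(s,a) - V(s)$. Strong duality (already invoked in the passage that produces \eqref{eq:smodice-dual-lagrangian-final} from Assumption \ref{assumption:strict-feasibility}) has swapped min and max, so for fixed $V$ the $(1-\gamma)\BE_{s\sim\mu_0}[V(s)]$ term is constant in $d$ and can be pulled outside, leaving the inner problem
\[
\max_{d(s,a)\geq 0}\; \BE_{(s,a)\sim d}[y(s,a)] \;-\; \D_f(d(s,a)\|d^O(s,a)).
\]

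First I would make the change of variables $w(s,a) := d(s,a)/d^O(s,a) \geq 0$, which is well-defined on the support of $d^O$ (Assumption \ref{assumption:expert-coverage} handles the states that matter for expert coverage, and outside of $\mathrm{supp}(d^O)$ any feasible $d$ must vanish). The inner objective rewrites as $\BE_{(s,a)\sim d^O}[w(s,a)y(s,a) - f(w(s,a))]$. Because the integrand is separable across $(s,a)$ and the constraint is point-wise, I can swap max and expectation:
\[
\BE_{(s,a)\sim d^O}\!\Big[\max_{w\geq 0}\, wy(s,a) - f(w)\Big] = \BE_{(s,a)\sim d^O}[f_\star(y(s,a))].
\]
Substituting this back and restoring the initial-state term yields the unconstrained objective in \eqref{eq:smodice-dual-lagrangian-fenchel}.

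For the importance-weight identity \eqref{eq:smodice-fenchel-solutions}, I would appeal directly to the Fenchel--KKT relation recorded in \eqref{eq:kkt-conditions}: the point-wise maximizer of $wy - f(w)$ is $w^\star = f'_\star(y)$. Evaluated at the outer optimum $V^\star$, this gives $d^\star(s,a)/d^O(s,a) = f'_\star(R(s) + \gamma \mathcal{T}V^\star(s,a) - V^\star(s))$. Because $f$ is strictly convex, this maximizer is unique, and strong duality lets us identify it as the genuine primal solution rather than a mere stationary point.

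The main obstacle I expect is a careful handling of the inequality constraint $w\geq 0$: strictly speaking, the point-wise maximization uses the restricted conjugate $f_\star(y) = \sup_{w\geq 0} wy - f(w)$ rather than the unconstrained Fenchel conjugate. For the $f$-divergences of interest here (e.g.\ KL with $f(x)=x\log x$, or $\chi^2$), $f$ is naturally $+\infty$ on $(-\infty,0)$, so the two notions coincide and no separate argument is needed; this is the only place where restricting to well-behaved $f$ enters. A secondary bookkeeping point is that the nonnegativity constraint $V(s)\geq 0$ and the $(1-\gamma)\BE_{s\sim\mu_0}[V(s)]$ term play no role in the inner maximization and are simply carried along, which is why they reappear unchanged in \eqref{eq:smodice-dual-lagrangian-fenchel}.
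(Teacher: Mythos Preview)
Your argument is correct. For the first claim (reducing the inner maximization to $\BE_{d^O}[f_\star(y)]$), your change-of-variables $w=d/d^O$ followed by pointwise maximization is exactly the computation underlying the paper's appeal to the Fenchel conjugate of an $f$-divergence (the identity recorded as \eqref{eq:fenchel-conjugate-f-divergence}); the two arguments are essentially identical, yours just unpacks the definition explicitly.

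For the importance-weight identity, you take a genuinely more direct route than the paper. You read off the pointwise maximizer $w^\star=f'_\star(y)$ from the scalar Fenchel relation \eqref{eq:kkt-conditions} and then invoke strong duality to identify this inner maximizer at $V^\star$ with the primal optimum. The paper instead recasts the unconstrained $V$-problem and the original constrained $d$-problem \eqref{eq:smodice-problem}--\eqref{eq:smodice-bellman-flow-constraint} as an abstract Fenchel primal--dual pair (introducing operators $g=\delta_{\{(1-\gamma)\mu_0\}}$, $h=\D_f(\cdot\|d^O)$, and $A=\gamma\mathcal{T}_\star-I$), proves a separate lemma establishing this duality, and then invokes the general optimality characterization $d^\star=h'_\star(-A_\star V^\star+R)$ for such pairs. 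Your approach is shorter and needs no auxiliary lemma; the paper's operator formulation is heavier but makes the linear structure of the Bellman-flow constraint explicit and connects more transparently to the broader Fenchel--Rockafellar / DICE framework it cites. Your remark about the restricted conjugate on $w\geq 0$ is a nice technical point that the paper leaves implicit.
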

This result can be viewed as using Fenchel duality to generalize prior DICE-based offline approaches~\cite{lee2021optidice, kim2022demodice}. In particular,
the inner maximization problem in \eqref{eq:smodice-dual-lagrangian-final} is precisely the Fenchel conjugate of $D_f(d(s,a) \| d^O(s,a))$ at $R(s) + \gamma \mathcal{T}V(s,a) - V(s)$ (compare \eqref{eq:smodice-dual-lagrangian-final} to \eqref{eq:fenchel-conjugate-f-divergence}). Similarly, \eqref{eq:smodice-fenchel-solutions} can be derived from leveraging the relationship between the optimal solutions of a pair of Fenchel primal-dual problems (Equation~\eqref{eq:kkt-conditions}). This generality allows us to choose problem-specific $f$-divergences that improve stability during optimization. In Appendix \ref{appendix:smodice-example}, we specialize the SMODICE objective for the KL- and $\chi^2$-divergences, which we use in our experiments.

\subsection{Weighted-Regression Policy Training}
\label{section:smodice-weighted-regression}
Finally, using the optimal importance weights, we can extract the optimal policy $\pi$ using weighted Behavior Cloning:
\begin{equation}
    \label{eq:weighted-bc-objective}
    \begin{split}
    &\min_\pi -\BE_{(s,a)\sim d^*}[\log \pi(a\mid s)]\\
    =& \min_\pi -\BE_{(s,a) \sim d^O}[\xi^*(s,a) \log \pi(a\mid s)]
    \end{split}
\end{equation}
where $\xi^*(s,a) = \frac{d^*(s,a)}{d^O(s,a)}$. Here, $V(s)$ can be viewed as the value function---it is trained by minimizing a convex function of the Bellman residuals and the values of the initial states. Then, it can be used to inform policy learning.

Putting everything together, SMODICE can achieve stable policy learning through a sequence of three \textit{disjoint} supervised learning problems, summarized in Algorithm \ref{alg:smodice-deep-abbreviated}. The full pseudo-code is in Algorithm~\ref{alg:smodice-deep} in Appendix~\ref{alg:smodice-deep}.
\begin{algorithm}[t!]
\caption{SMODICE}\label{alg:smodice-deep-abbreviated}
\begin{algorithmic}[1]
\STATE \textcolor{purple}{\texttt{// Discriminator Learning}}
\STATE Train discriminator $c^*(s)$ using \eqref{eq:adversarial-training} and derive $R(s)$.
\STATE \textcolor{purple}{\texttt{// Value Learning}}
\STATE Train derived value function $V(s)$ using \eqref{eq:smodice-dual-lagrangian-fenchel} 
\STATE \textcolor{purple}{\texttt{// Policy Learning}}
\STATE Derive optimal ratios $\xi^*(s,a)$ through \eqref{eq:smodice-fenchel-solutions} 
\STATE Train policy $\pi$ using weighted BC \eqref{eq:weighted-bc-objective}
\end{algorithmic}
\end{algorithm}

\subsection{SMODICE for Tabular MDPs.}
\label{section:smodice-tabular-mdps}
An appealing property of SMODICE is that it admits closed-form analytic solution in the tabular case. The proof is given in Appendix~\ref{appendix:tabular-smodice}.
\begin{theorem}
\label{theorem:smodice-tabular-closed-form}
Let $R(s) = \log \frac{d^E(s)}{d^O(s)} \in \mathbb{R}_+^{|\mc{S}|}$, and define $\mathcal{T} \in \mathbb{R}^{|\mc{S}||\mc{A}| \times |\mc{S}|}$ and $\mc{B} \in \mathbb{R}^{|\mc{S}||\mc{A}| \times |\mc{S}|}$ by $(\mathcal{T} V)(s,a)  = \sum_{s'} T(s'|s,a) V(s')$ and $(\mathcal{B}V)(s,a) =  V(s)$. Additionally, denote $\mu_0 \in \Delta(|\mc{S}|)$ and $D = \mathrm{diag}(d^O) \in \mathbb{R}^{|\mc{S}||\mc{A}| \times |\mc{S}||\mc{A}|}$. Then, choosing the $\chi^2$-divergence in \eqref{eq:smodice-dual-lagrangian-fenchel}, we have
\begin{equation}
\label{eq:smodice-tabular-optimal-V}
\begin{split}
V^* = &\left((\gamma \mc{T} - \mc{B})^\top D (\gamma \mc{T} - \mc{B})\right)^{-1}\\
&\left((\gamma-1)\mu_0 + (\mc{B} - \gamma \mc{T})^\top D(I+BR) \right)
\end{split}
\end{equation}
\end{theorem}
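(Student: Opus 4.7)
The plan is to exploit the fact that with the $\chi^2$-divergence the Fenchel conjugate $f_\star$ is quadratic, so the dual objective in \eqref{eq:smodice-dual-lagrangian-fenchel} becomes a strictly convex quadratic form in $V$ whose unique minimizer is obtained by setting the gradient to zero and solving a linear system.

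First, I would fix the parameterization $f(x)=\tfrac12(x-1)^2$ for the $\chi^2$-divergence, which yields $f_\star(y)=y+\tfrac12 y^2$ and $f'_\star(y)=1+y$. Plugging this into \eqref{eq:smodice-dual-lagrangian-fenchel} turns the dual problem into
\[
\min_V \; (1-\gamma)\,\mu_0^\top V \;+\; \BE_{(s,a)\sim d^O}\!\left[\, y(s,a) + \tfrac12\, y(s,a)^2 \,\right],
\]
where $y(s,a) := R(s) + \gamma \mc{T} V(s,a) - V(s)$ is the generalized Bellman residual.

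Next, I would vectorize using the operators introduced in the statement. With $\mc{T},\mc{B}\in\mathbb{R}^{|\mc{S}||\mc{A}|\times|\mc{S}|}$, the residual vector is $y = \mc{B}R + A V$, where $A := \gamma\mc{T}-\mc{B}$, and the $d^O$-weighted expectation is $\mathbf{1}^\top D(\cdot)$. The dual objective then becomes
\[
J(V) = (1-\gamma)\mu_0^\top V + \mathbf{1}^\top D(\mc{B}R + A V) + \tfrac12(\mc{B}R + A V)^\top D(\mc{B}R + A V),
\]
a convex quadratic in $V$. Setting $\nabla_V J(V)=0$ gives
\[
(1-\gamma)\mu_0 + A^\top D\,\mathbf{1} + A^\top D(\mc{B}R + A V^*) = 0,
\]
which rearranges to
\[
A^\top D A \, V^* \;=\; (\gamma-1)\mu_0 \;-\; A^\top D(\mathbf{1}+\mc{B}R).
\]
Substituting $A=\gamma\mc{T}-\mc{B}$ and using $-A^\top = (\mc{B}-\gamma\mc{T})^\top$ recovers \eqref{eq:smodice-tabular-optimal-V} verbatim, identifying the statement's $I$ with the all-ones vector $\mathbf{1}$ and $BR$ with $\mc{B}R$.

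The main obstacle is justifying invertibility of $A^\top D A = (\gamma\mc{T}-\mc{B})^\top D(\gamma\mc{T}-\mc{B})$, since otherwise the closed form in \eqref{eq:smodice-tabular-optimal-V} is not well-defined and the stationary-point calculation above is not sufficient. I would argue that under Assumption~\ref{assumption:strict-feasibility}, $d^O$ has positive mass on the state-action pairs relevant to the optimization, so $D$ is positive on those coordinates. Meanwhile $\mc{B}-\gamma\mc{T}$ is the matrix representation of $V\mapsto V(s)-\gamma\sum_{s'}T(s'\mid s,a)V(s')$, which is injective for $\gamma\in(0,1)$ because the standard operator $I-\gamma P^\pi$ is invertible for every stochastic policy $\pi$. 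Combining injectivity of $A$ with positivity of $D$ on the relevant coordinates yields $A^\top D A \succ 0$, so the stationary point is the unique global minimum of the strictly convex dual and the closed-form expression in \eqref{eq:smodice-tabular-optimal-V} is well-defined.
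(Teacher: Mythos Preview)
Your proposal is correct and follows essentially the same route as the paper: write the $\chi^2$ dual objective as a convex quadratic $J(V)=(1-\gamma)\mu_0^\top V+\tfrac12(r_V+I)^\top D(r_V+I)$ with $r_V=\mc{B}R+(\gamma\mc{T}-\mc{B})V$, set $\nabla_V J=0$, and solve the resulting linear system. Your extra discussion of the invertibility of $(\gamma\mc{T}-\mc{B})^\top D(\gamma\mc{T}-\mc{B})$ is a welcome addition that the paper does not spell out (indeed, the paper's pseudocode uses a pseudoinverse), though your argument only goes through cleanly when $d^O(s,a)>0$ on all of $\mc{S}\times\mc{A}$.
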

In Appendix~\ref{appendix:tabular-smodice}, we also derive a finite-sample performance guarantee of SMODICE in the tabular setting.

\section{Offline Imitation Learning from Examples} 

Next, we describe how SMODICE can be applied to offline imitation learning from examples. Starting from the original problem objective from~\citet{eysenbach2021replacing}, we derive a state-occupancy matching objective, enabling us to apply SMODICE without any modification.

\para{Problem setting.} We assume given success examples $S^* = \{s^* \sim p_U(s_t\mid e_t=1)\}$, where $e \in \{0,1\}$  indicates whether the current state is a success outcome, and offline data $\mc{D} = \{(s,a,s')\}$. Here, $U$ is the state distribution of the ``user'' providing success examples. Then, ~\citet{eysenbach2021replacing} proposes the example-based RL objective
\begin{equation}
\label{eq:example-based-rl}
\arg \max_\pi \log p^\pi(e_{t+}=1) = \log \mathbb{E}_{s \sim \mu_0}\left[ p^\pi(e_{t+} = 1 | s_0) \right]
\end{equation}
That is, we want a policy that maximizes the probability of reaching success states in the future. To tackle this problem in the offline setting, our strategy is to convert \eqref{eq:example-based-rl} into an optimization problem over the state-occupancy space.

\para{Intuition.} By parameterizing the problem in terms of state occupancies, a policy that reaches success states in the future is one that has non-zero occupancies at these states---i.e., $d^\pi(s)$ corresponds to a policy that reaches success states if $d^\pi(s)>0$ for $s \in \mc{S}^*$. Furthermore, treating success states as absorbing states in the MDP, then $\sum_{s \in \mc{S}^*} d^\pi(s)$ should ideally be \textit{much} larger than $\sum_{s \notin \mc{S}^*} d^\pi(s)$ (we validate this on gridworld; see Appendix~\ref{appendix:tabular-gridworld-experiments}).

\para{Derivation.} We first transform the problem into state-occupancy space---i.e.,
\begin{equation}
\label{eq:example-rl-occupancy-problem}
\max_\pi \log \mathbb{E}_{s \sim \mu_0}\left[p^\pi(e_{t+} = 1 | s_0) \right] = \max_{d \geq 0} \log \mathbb{E}_{s\sim d(s)}\left[p(e | s)\right]
\end{equation}
which is valid given that the original objective can be thought of as a regular RL problem with reward function $r(s) = p(e\mid s)$~\cite{eysenbach2021replacing}.

Given this formulation, we can derive a tractable lower bound to \eqref{eq:example-rl-occupancy-problem} through Jensen's inequality and Bayes' rule:
\begin{align*} 
\begin{split}
&\log \mathbb{E}_{s\sim d(s)}\left[p_U(e \mid s)\right]\\ 
\ge& \mathbb{E}_{s\sim d(s)}\left[\log p_U(e \mid s)\right] \\
=& \mathbb{E}_{s\sim d(s)}\left[ \log \frac{p_U(s \mid e) p_U(e)}{p_U(s)}\right] \\ 
=&  \mathbb{E}_{s\sim d(s)}\left[ \log \frac{p_U(s \mid e) }{d(s)} \right] +
\mathbb{E}_{s\sim d(s)}\left[\log \frac{d(s)}{p_U(s)}\right] +\mathrm{const.} \\
=& -\D_\KL\left(d(s) \| p_U(s\mid e)\right) + \D_\KL\left(d(s) \| p_U(s) \right) +\mathrm{const.} \\
\ge& -\D_\KL\left(d(s) \| p_U(s\mid e)\right) +\mathrm{const.} \\
\end{split}
\end{align*}
We can optimize the original objective by maximizing this lower bound. Doing so is equivalent to solving 
\begin{equation}
\label{eq:smodice-for-example}
    \min_{d\geq 0} \D_\KL\left(d(s) \| p_U(s\mid e)\right), 
\end{equation}
which is exactly in the form of the state-occupancy matching objective \eqref{eq:state-f-divergence} in the scope of SMODICE. Furthermore, this objective admits an intuitive explanation from a purely imitation learning lens. We can think of $p_U(s \mid e)$ as the state-occupancy distribution of an expert agent who can ``teleport'' to any success state in one time-step. Therefore, we have shown that example-based RL can be understood as a state-occupancy minimization problem between a MDP-dynamics abiding imitator and a teleporting expert agent. Consequently, SMODICE can be used in the offline setting without any algorithmic modification.

\section{Related Work}

\para{Offline imitation learning.} 
The closest work is concurrent work, DEMODICE~\cite{kim2022demodice}, a state-action based offline IL method, also using the DICE paradigm to estimates the occupancy ratio between the expert and the imitator; we overview the DICE literature in Appendix~\ref{appendix:related-work}. Due to its dependence on expert actions, DEMODICE cannot be applied to the three problem settings we study. At a technical level, a key limitation of DEMODICE is that it does not exploit the form of general Fenchel duality and only support the KL-divergence, forgoing other $f$-divergences that can lead to more stable optimization~\cite{ghasemipour2019divergence, ke2020imitation, zhu2020off}. Another related work is ORIL~\cite{zolna2020offline}, which adapts GAIL~\cite{ho2016generative} to the offline setting.
Finally, there has been recent work learning a pessimistic dynamics model using the offline dataset and then performs imitation learning by minimizing the state-action occupancy divergence with respect to the expert inside this learned model~\cite{chang2021mitigating}. As with DEMODICE, this approach requires expert actions and cannot be applied to the settings we study.

\begin{figure}[t!]
\subfigure[Mujoco]{\includegraphics[width=0.33\columnwidth]{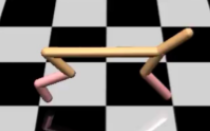}}
\subfigure[AntMaze]{\label{fig:antmaze}\includegraphics[width=0.3\columnwidth]{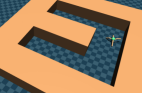}}
\subfigure[Franka Kitchen]{\label{fig:kitchen}\includegraphics[width=0.3\columnwidth]{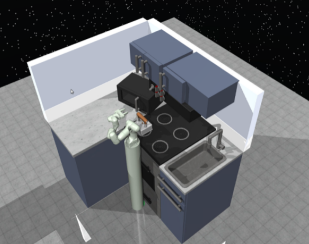}}
\vspace{-0.2cm}
\caption{\textbf{Illustrations of the evaluation environments.}}
\vspace{-0.6cm}
\label{figure:env-illustration}
\end{figure}

\para{Imitation from observations, imitation with mismatched experts, and example-based RL} 
All three of these problems have been studied in the online setting. IfO is often achieved through training an additional inverse dynamics model to infer the expert actions~\cite{torabi2018behavioral, torabi2019generative, liu2019state, radosavovic2020stateonly, gangwani2020state}; in contrast, SMODICE matches the expert observations by identifying the correct actions supported in the offline data. To handle experts with dynamics mismatch, some work explicitly learns a correspondence between the expert and the imitator MDPs~\cite{kim2020domain, raychaudhuri2021crossdomain}; however, these approaches make much stronger assumptions on access to the expert MDP that are difficult to satisfy in the offline setting, such as demonstrations from auxillary tasks. In contrast, SMODICE falls under the category of state-only imitation learning~\cite{liu2019state, radosavovic2020stateonly}, which overcomes expert dynamics differences by only matching the shared task-relevant state space (e.g., $xy$ coordinates for locomotion tasks). Finally, example-based RL was first studied in~\citet{eysenbach2021replacing}; they introduce a recursive-classifier based off-policy actor critic method to solve it. By casting this problem as state-occupancy matching between an imitator and a ``teleporting'' expert agent, SMODICE can solve the offline variant of this problem without modification. 

\section{Experiments}

We experimentally demonstrate that SMODICE is effective for offline IL from observations, mismatched experts, and examples. We give additional experimental details in Appendices \ref{appendix:oilo}, \ref{appendix:oilhe}, and \ref{appendix:oile}, and videos on the project website\footnote{Code is available at:  \href{https://github.com/JasonMa2016/SMODICE}{https://github.com/JasonMa2016/SMODICE}}. 

\begin{figure*}[t!]
\includegraphics[width=\textwidth]{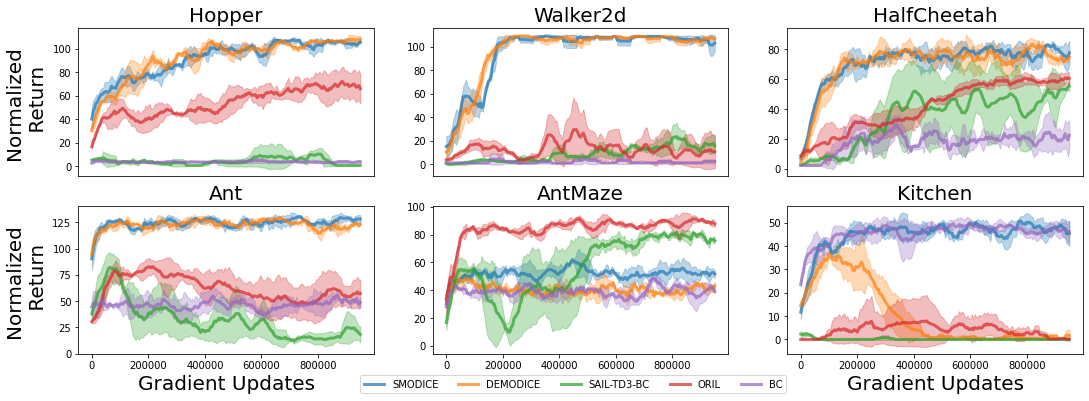}
\vspace{-0.7cm}
\caption{\textbf{Offline imitation learning from observations results.}}
\vspace{-0.5cm}
\label{figure:offline-il-observations}
\end{figure*}

\subsection{Offline Imitation Learning from Observations} 
\label{section:offline-il-observations}

\para{Datasets.} We utilize the D4RL~\cite{fu2021d4rl} offline RL dataset. The dataset compositions for all tasks are listed in Table \ref{table:d4rl-datasets} in Appendix \ref{appendix:oilo}. We consider the following standard Mujoco environments: Hopper, Walker2d, HalfCheetah, and Ant. For each, we take a single expert trajectory from the respective ``expert-v2'' dataset as the expert dataset and omit the actions. For the offline dataset, following~\citet{kim2022demodice}, we use a mixture of small number of expert trajectories ($\leq 200$ trajectories) and a large number of low-quality trajectories from the ``random-v2'' dataset (we use the full random dataset, consisting of around 1 million transitions). This dataset composition is particularly challenging as the learning algorithm must be able to successfully distinguish expert from low-quality data in the offline dataset.

We also include two more challenging environments from D4RL: AntMaze and Franka Kitchen. In AntMaze (Figure \ref{fig:antmaze}), an Ant agent is tasked with navigating an U-shaped maze from one end to the other end (i.e., the goal region). The offline dataset (i.e., ``antmaze-umaze-v2'') consists of trajectories ($\approx$ 300k transitions) of an Ant agent navigating to the goal region from initial states;  The trajectories are not always successful; often, the Ant flips over to its legs before it reaches the goal. We visualize this dataset on the project website. As above, we additionally include 1 million random-action transitions to increase the task difficulty. We take one trajectory from the offline dataset that successfully reaches the goal to be the expert trajectory. Franka Kitchen (Figure \ref{fig:kitchen}), introduced by~\citet{gupta2019relay}, involves controlling a 9-DoF Franka robot to manipulate common household kitchen objects (e.g., microwave, kettle, cabinet) sequentially to achieve a pre-specified configuration of objects. The dataset (i.e., ``kitchen-mixed-v0'') consists of \textit{undirected} human teleoperated demonstrations, meaning that each trajectory only solves a subset of the tasks. Together, these six tasks (illustrated in Figure \ref{figure:env-illustration}) require scalability to high-dimensional state-action spaces and robustness to different dataset compositions.

\para{Method and baselines.} We use SMODICE with $\chi^2$-divergence for all tasks (in other problem settings as well) except Hopper, Walker, and Halfcheetah, where we find SMODICE with KL-divergence to perform better; in Appendix \ref{appendix:smodice-choosing-f-divergence}, we explain how to choose the appropriate $f$-divergence \textit{offline} by monitoring SMODICE's policy loss. For comparisons, we consider both IfO and regular offline IL methods, which make use of expert actions. For the former, we compare against (i) \textbf{SAIL-TD3-BC}, which combines a state-of-art state-matching based \textit{online} IL algorithm (SAIL)~\cite{liu2019state} with a state-of-art offline RL algorithm (TD3-BC)~\cite{fujimoto2021minimalist},\footnote{We chose TD3-BC due to its simplicity and stability.}
(ii) \textbf{Offline Reinforced Imitation Learning (ORIL)}~\cite{zolna2020offline}, which adapts GAIL~\cite{ho2016generative} to the offline setting by using an offline RL algorithm for policy optimization; we implement ORIL using the same state-based
discriminator as in SMODICE, and TD3-BC as the offline RL algorithm. For the latter, we consider the state-of-art \textbf{DEMODICE}~\cite{kim2022demodice} as well as \textbf{Behavior Cloning (BC)}. We train all algorithms for 1 million gradient steps and keep track of the normalized score (i.e., 100 is expert performance, 0 is random-action performance) during training; the normalized score is averaged over 10 independent rollouts. All methods are evaluated over 3 seeds, and one standard-deviation confidence intervals are shaded.

\para{Results.} As shown in Figure~\ref{figure:offline-il-observations}, only SMODICE achieves stable and good performance in all six tasks. It achieves (near) expert performance in all the Mujoco environments, performing on-par with DEMODICE and doing so without the privileged information of expert actions. SMODICE's advantage over DEMODICE is more apparent in AntMaze and Kitchen. In the former, SMODICE outperforms BC, while DEMODICE cannot; in the latter, DEMODICE quickly collapses due to its use of KL-divergence, which may be numerically unstable in high-dimensional environments. Furthermore, we adapt DEMODICE to the state-only setting by training a state-based discriminator; in Appendix~\ref{appendix:oilo-additional-results}, we report the results and find DEMODICE to significantly underperform in the most challenging tasks across three settings.

BC is a strong baseline for tasks where the offline dataset contains (near) expert data (i.e., AntMaze and Kitchen); however, as the dataset becomes more diverse, BC's performance drops significantly. SAIL-TD3-BC and ORIL both fail to learn in some environments and otherwise converge to a worse policy than SMODICE. The only exception is AntMaze; however, in Appendix~\ref{appendix:oilo-additional-results}, we show that both methods collapse with a more diverse version of the AntMaze offline dataset, indicating that unlike SMODICE, these methods are highly sensitive to the composition of the offline dataset, and work best with task-aligned offline data. The sub-par performances of SAIL and ORIL highlight the challenges of adapting online IL methods to the offline setting; we hypothesize that it is not sufficient to simply equip the original methods (i.e., SAIL and GAIL) with a strong base offline RL algorithm. Together, these results demonstrate that SMODICE is stable, scalable, and robust, and significantly outperforms prior methods. Finally, in Appendix~\ref{appendix:oilo-additional-results}, we ablate SMODICE by zeroing out its discriminator-based reward to validate that SMODICE's empirical performance comes from its ability to discriminate expert data in the offline dataset.

\begin{figure*}[t!]
\includegraphics[width=\textwidth]{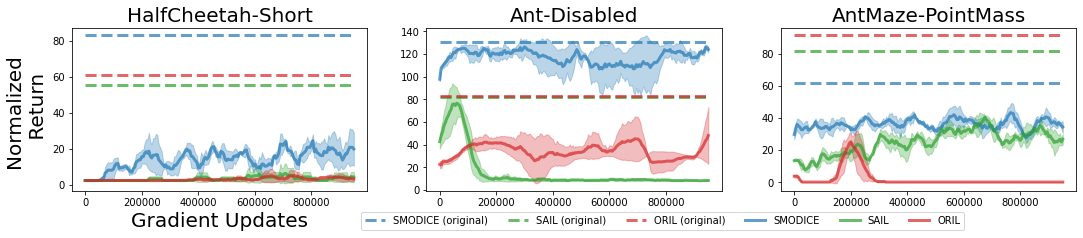}
\vspace{-0.7cm}
\caption{\textbf{Offline imitation learning from mismatched experts results.}}
\vspace{-0.3cm}
\label{figure:offline-il-observations-mismatched}
\end{figure*}

\begin{figure*}[t!]
\includegraphics[width=\textwidth]{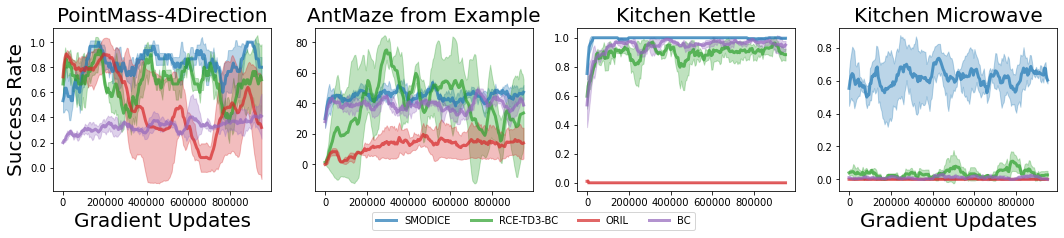}
\vspace{-0.7cm}
\caption{\textbf{Offline imitation learning from examples results.}}
\vspace{-0.4cm}

\label{figure:offline-il-example}
\end{figure*}

\subsection{Offline IL from Mismatched Experts} 
\label{section:offline-il-mismatched}

\para{Datasets and baselines.} We compare SMODICE to SAIL-TD3-BC and ORIL, which are both state-based offline IL methods; in particular, we note that SAIL is originally designed to be robust to mismatched experts.  We consider only tasks in which both SAIL-TD3-BC and ORIL obtained non-trivial performance, including HalfCheetah, Ant, and AntMaze. Then, for each environment, we train a mismatched expert and collect one expert trajectory, replacing the original expert trajectory used in Section~\ref{section:offline-il-observations}. The mismatched experts for the respective tasks are (i) ``HalfCheetah-Short'', where the torso of the cheetah agent is halved in length, (ii) ``Ant-Disabled'', where the front legs are shrank by a quarter in length, and (iii) a 2D PointMass agent operating in the same maze configuration. The mismatched experts are illustrated in Figure~\ref{figure:mismatched-expert} in Appendix~\ref{appendix:oilhe} and the project website. For the first two, we train an expert policy using SAC~\cite{haarnoja2018soft} and collect one expert trajectory. The latter task is already in D4RL; thus, we take one trajectory from ``maze2d-umaze-v0'' as the expert trajectory. Because Ant and PointMass have different state spaces, following~\citet{liu2019state}, we train the discriminator on the shared $xy$-coordinates of the two state spaces. The offline datasets are identical to the ones in Section~\ref{section:offline-il-observations}.

\para{Results.} The training curves are shown in Figure \ref{figure:offline-il-observations-mismatched}; we illustrate the original maximum performance attained by each method (i.e., using the original expert trajectory, Section~\ref{section:offline-il-observations}) using dashed lines as points of reference. As can be seen, SMODICE is significantly more robust to mismatched experts than either SAIL-TD3-BC or ORIL. On AntMaze, the task where SAIL-TD3-BC and ORIL originally outperform SMODICE, learning from a PointMass expert significantly deteriorates their performances, and the learned policies are noticably worse than that of SMODICE, which has the smallest performance drop. The other two tasks exhibit similar trends; SMODICE is able to learn an expert level policy for the original Ant embodiment using a disabled Ant expert, and is the only method that shows any progress on the hardest HalfCheetah-Short task. Despite using the same discriminator for reward supervision, SMODICE is substantially more robust than ORIL, likely due to the occupancy-constraint $\D_f(d(s,a) \|d^O(s,a))$ term in its objective \eqref{eq:state-matching-upper-bound-f}, which ensures that the learned policy is supported by the offline data as it attempts to match the expert states. On the project website, we visualize SMODICE and ORIL policies on all tasks. In Appendix \ref{appendix:quantitative-analysis-oil-mismatched}, we provide additional quantitative analysis of Figure \ref{figure:offline-il-observations-mismatched}.

\subsection{Offline Imitation Learning from Examples}
\label{section:offline-il-examples}

\para{Tasks.} We use the AntMaze and Kitchen environments and create example-based task variants. For AntMaze, we replace the full demonstration with a small set of success states (i.e., Ant in the goal region) extracted from the offline data. For Kitchen, we consider two subtasks in the environment: Kettle and Microwave. and define task success to be only whether the specified object is correctly placed (instead of all objects as in the original task); the success states are extracted from the offline data accordingly. Examples of the success states are illustrated in Figure \ref{figure:success-states} in Appendix \ref{appendix:oile}.  Note that the kitchen dataset contains many trajectories where the kettle is moved first. Thus, the kettle task is easy even for Behavior Cloning (BC), since cloning the offline data can lead to success. This is not the case for the microwave task, making it much more difficult to solve using only success examples. In addition, we introduce the PointMass-4Direction environment. Here, a 2D PointMass agent is tasked with navigating to the middle point of a specified edge of the square that encloses the agent (see Figure \ref{figure:success-states}(a)). The offline dataset is generated using a waypoint navigator controlling the agent to each of the four possible goals and contains equally many trajectories for each goal; we visualize this dataset on the project website. At training and evaluation time, we set the left edge to be the desired edge and collect success states from the offline data accordingly. This task is low-dimensional but consists of multi-task offline data, making it challenging for algorithms such as BC that do not solve the example-based RL objective. 

\setlength\intextsep{0pt}
\begin{wrapfigure}{r}{0.5\linewidth}
\includegraphics[width=0.5\columnwidth]{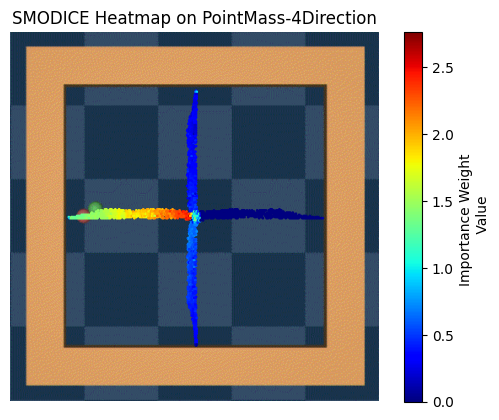}
\vspace{-0.9cm}
\caption{\textbf{SMODICE weights.}}
\label{figure:smodice-pointmass-importance-weight}
\end{wrapfigure}

\para{Approaches.} We make no modification to SMODICE; the only difference is that the discriminator is trained using success states instead of full expert state trajectories. Our main comparison is \textbf{RCE-TD3-BC}, which combines RCE~\cite{eysenbach2021replacing}, the state-of-art online example-based RL method, and TD3-BC. We also compare against ORIL~\cite{zolna2020offline}, using the same architecture as in Section~\ref{section:offline-il-observations}. Finally, we also include BC.

\para{Results.} As shown in Figure \ref{figure:offline-il-example}, SMODICE is the best performing method on all four tasks and is the only one that can solve the Microwave task; we visualize all methods' policies on all tasks on the project website. RCE-TD3-BC is able to solve the first three tasks, but achieves worse solutions and exhibits substantial performance fluctuation during training; we posit that the optimization for RCE, which requires alternate updates to a recursive classifier and a policy, is substantially more difficult than that of SMODICE. ORIL is unstable and fails to make progress in most tasks. Interestingly, as in the mismatched expert setting, on AntMaze, ORIL's performance is far below that of SMODICE, despite attaining better results originally (Figure~\ref{figure:offline-il-observations}). This comparison demonstrates the versatility of SMODICE afforded by its state-occupancy matching objective; in contrast, ORIL treats offline IL from examples as an offline RL task with discriminator-based reward and cannot solve the task.

To better understand SMODICE, on PointMass-4Direction, we visualize the importance weights $\xi(s,a)$ it assigns to the offline dataset. As shown in Figure~\ref{figure:smodice-pointmass-importance-weight}, SMODICE assigns much higher weights to transitions along the correct path from the initial state region to the success examples. Interestingly, the weights progressively decrease along this path, indicating that SMODICE has learned that it must pay more attention transitions at the beginning of the path, since making a mistake there is more likely to derail progress towards the goal. This behavior occurs \textit{automatically} via SMODICE's state-matching objective without any additional bias.
\vspace{-0.3cm}
\section{Conclusion}
We have proposed SMODICE, a simple, stable, and versatile algorithm for offline imitation learning from observations, mismatched experts, and examples. Leveraging Fenchel duality, SMODICE derives the optimal dual value function to the state-occupancy matching objective, and obtains an uninterleaved optimization procedure for its value and policy networks that is favorable in the offline setting. Through extensive experiments, we have shown that SMODICE significantly outperforms prior state-of-art methods in all three settings. We believe that the generality of SMODICE's optimization procedure invites many future work directions, including offline model-based RL~\cite{yu2020mopo, kidambi2020morel}, safe RL~\cite{ma2021conservative2}, and extending it to visual domains.

\section*{Acknowlegement}
We thank members of Perception, Action, and Learning group at UPenn for their feedback. This work is funded in part by an Amazon Research Award,
gift funding from NEC Laboratories America, NSF Award
CCF-1910769, NSF Award CCF-1917852 and ARO Award
W911NF-20-1-0080. The U.S. Government is authorized to
reproduce and distribute reprints for Government purposes
notwithstanding any copyright notation herein.
\newpage 
\bibliography{references}

\begin{thebibliography}{46}
\providecommand{\natexlab}[1]{#1}
\providecommand{\url}[1]{\texttt{#1}}
\expandafter\ifx\csname urlstyle\endcsname\relax
  \providecommand{\doi}[1]{doi: #1}\else
  \providecommand{\doi}{doi: \begingroup \urlstyle{rm}\Url}\fi

\bibitem[Baird(1995)]{baird1995residual}
Baird, L.
\newblock Residual algorithms: Reinforcement learning with function
  approximation.
\newblock In \emph{Machine Learning Proceedings 1995}, pp.\  30--37. Elsevier,
  1995.

\bibitem[Boyd et~al.(2004)Boyd, Boyd, and Vandenberghe]{boyd2004convex}
Boyd, S., Boyd, S.~P., and Vandenberghe, L.
\newblock \emph{Convex optimization}.
\newblock Cambridge university press, 2004.

\bibitem[Chang et~al.(2021)Chang, Uehara, Sreenivas, Kidambi, and
  Sun]{chang2021mitigating}
Chang, J.~D., Uehara, M., Sreenivas, D., Kidambi, R., and Sun, W.
\newblock Mitigating covariate shift in imitation learning via offline data
  without great coverage, 2021.

\bibitem[Dai et~al.(2016)Dai, He, Pan, Boots, and Song]{dai2016learning}
Dai, B., He, N., Pan, Y., Boots, B., and Song, L.
\newblock Learning from conditional distributions via dual embeddings, 2016.

\bibitem[Dai et~al.(2020)Dai, Nachum, Chow, Li, Szepesv{\'a}ri, and
  Schuurmans]{dai2020coindice}
Dai, B., Nachum, O., Chow, Y., Li, L., Szepesv{\'a}ri, C., and Schuurmans, D.
\newblock Coindice: Off-policy confidence interval estimation.
\newblock \emph{arXiv preprint arXiv:2010.11652}, 2020.

\bibitem[Eysenbach et~al.(2021)Eysenbach, Levine, and
  Salakhutdinov]{eysenbach2021replacing}
Eysenbach, B., Levine, S., and Salakhutdinov, R.
\newblock Replacing rewards with examples: Example-based policy search via
  recursive classification.
\newblock In \emph{Thirty-Fifth Conference on Neural Information Processing
  Systems}, 2021.
\newblock URL \url{https://openreview.net/forum?id=VXeoK3fJZhW}.

\bibitem[Fu et~al.(2021)Fu, Kumar, Nachum, Tucker, and Levine]{fu2021d4rl}
Fu, J., Kumar, A., Nachum, O., Tucker, G., and Levine, S.
\newblock D4rl: Datasets for deep data-driven reinforcement learning, 2021.

\bibitem[Fujimoto \& Gu(2021)Fujimoto and Gu]{fujimoto2021minimalist}
Fujimoto, S. and Gu, S.~S.
\newblock A minimalist approach to offline reinforcement learning.
\newblock \emph{arXiv preprint arXiv:2106.06860}, 2021.

\bibitem[Gangwani \& Peng(2020)Gangwani and Peng]{gangwani2020state}
Gangwani, T. and Peng, J.
\newblock State-only imitation with transition dynamics mismatch.
\newblock \emph{arXiv preprint arXiv:2002.11879}, 2020.

\bibitem[Ghasemipour et~al.(2019)Ghasemipour, Zemel, and
  Gu]{ghasemipour2019divergence}
Ghasemipour, S. K.~S., Zemel, R., and Gu, S.
\newblock A divergence minimization perspective on imitation learning methods,
  2019.

\bibitem[Goodfellow et~al.(2014)Goodfellow, Pouget-Abadie, Mirza, Xu,
  Warde-Farley, Ozair, Courville, and Bengio]{goodfellow2014generative}
Goodfellow, I.~J., Pouget-Abadie, J., Mirza, M., Xu, B., Warde-Farley, D.,
  Ozair, S., Courville, A., and Bengio, Y.
\newblock Generative adversarial networks, 2014.

\bibitem[Gupta et~al.(2019)Gupta, Kumar, Lynch, Levine, and
  Hausman]{gupta2019relay}
Gupta, A., Kumar, V., Lynch, C., Levine, S., and Hausman, K.
\newblock Relay policy learning: Solving long-horizon tasks via imitation and
  reinforcement learning.
\newblock \emph{arXiv preprint arXiv:1910.11956}, 2019.

\bibitem[Haarnoja et~al.(2018)Haarnoja, Zhou, Abbeel, and
  Levine]{haarnoja2018soft}
Haarnoja, T., Zhou, A., Abbeel, P., and Levine, S.
\newblock Soft actor-critic: Off-policy maximum entropy deep reinforcement
  learning with a stochastic actor, 2018.

\bibitem[Harris et~al.(2020)Harris, Millman, van~der Walt, Gommers, Virtanen,
  Cournapeau, Wieser, Taylor, Berg, Smith, Kern, Picus, Hoyer, van Kerkwijk,
  Brett, Haldane, del R{\'{i}}o, Wiebe, Peterson, G{\'{e}}rard-Marchant,
  Sheppard, Reddy, Weckesser, Abbasi, Gohlke, and Oliphant]{harris2020array}
Harris, C.~R., Millman, K.~J., van~der Walt, S.~J., Gommers, R., Virtanen, P.,
  Cournapeau, D., Wieser, E., Taylor, J., Berg, S., Smith, N.~J., Kern, R.,
  Picus, M., Hoyer, S., van Kerkwijk, M.~H., Brett, M., Haldane, A., del
  R{\'{i}}o, J.~F., Wiebe, M., Peterson, P., G{\'{e}}rard-Marchant, P.,
  Sheppard, K., Reddy, T., Weckesser, W., Abbasi, H., Gohlke, C., and Oliphant,
  T.~E.
\newblock Array programming with {NumPy}.
\newblock \emph{Nature}, 585\penalty0 (7825):\penalty0 357--362, September
  2020.
\newblock \doi{10.1038/s41586-020-2649-2}.
\newblock URL \url{https://doi.org/10.1038/s41586-020-2649-2}.

\bibitem[Ho \& Ermon(2016)Ho and Ermon]{ho2016generative}
Ho, J. and Ermon, S.
\newblock Generative adversarial imitation learning, 2016.

\bibitem[Ke et~al.(2020)Ke, Choudhury, Barnes, Sun, Lee, and
  Srinivasa]{ke2020imitation}
Ke, L., Choudhury, S., Barnes, M., Sun, W., Lee, G., and Srinivasa, S.
\newblock Imitation learning as $f$-divergence minimization, 2020.

\bibitem[Kidambi et~al.(2020)Kidambi, Rajeswaran, Netrapalli, and
  Joachims]{kidambi2020morel}
Kidambi, R., Rajeswaran, A., Netrapalli, P., and Joachims, T.
\newblock Morel: Model-based offline reinforcement learning.
\newblock \emph{arXiv preprint arXiv:2005.05951}, 2020.

\bibitem[Kim et~al.(2022)Kim, Seo, Lee, Jeon, Hwang, Yang, and
  Kim]{kim2022demodice}
Kim, G.-H., Seo, S., Lee, J., Jeon, W., Hwang, H., Yang, H., and Kim, K.-E.
\newblock Demo{DICE}: Offline imitation learning with supplementary imperfect
  demonstrations.
\newblock In \emph{International Conference on Learning Representations}, 2022.
\newblock URL \url{https://openreview.net/forum?id=BrPdX1bDZkQ}.

\bibitem[Kim et~al.(2020)Kim, Gu, Song, Zhao, and Ermon]{kim2020domain}
Kim, K., Gu, Y., Song, J., Zhao, S., and Ermon, S.
\newblock Domain adaptive imitation learning.
\newblock In \emph{International Conference on Machine Learning}, pp.\
  5286--5295. PMLR, 2020.

\bibitem[Kingma \& Ba(2014)Kingma and Ba]{kingma2014adam}
Kingma, D.~P. and Ba, J.
\newblock Adam: A method for stochastic optimization.
\newblock \emph{arXiv preprint arXiv:1412.6980}, 2014.

\bibitem[Kostrikov et~al.(2018)Kostrikov, Agrawal, Dwibedi, Levine, and
  Tompson]{kostrikov2018discriminator}
Kostrikov, I., Agrawal, K.~K., Dwibedi, D., Levine, S., and Tompson, J.
\newblock Discriminator-actor-critic: Addressing sample inefficiency and reward
  bias in adversarial imitation learning.
\newblock \emph{arXiv preprint arXiv:1809.02925}, 2018.

\bibitem[Kostrikov et~al.(2020)Kostrikov, Nachum, and
  Tompson]{Kostrikov2020Imitation}
Kostrikov, I., Nachum, O., and Tompson, J.
\newblock Imitation learning via off-policy distribution matching.
\newblock In \emph{International Conference on Learning Representations}, 2020.
\newblock URL \url{https://openreview.net/forum?id=Hyg-JC4FDr}.

\bibitem[Kumar et~al.(2019)Kumar, Fu, Tucker, and Levine]{kumar2019stabilizing}
Kumar, A., Fu, J., Tucker, G., and Levine, S.
\newblock Stabilizing off-policy q-learning via bootstrapping error reduction.
\newblock \emph{arXiv preprint arXiv:1906.00949}, 2019.

\bibitem[Kumar et~al.(2020)Kumar, Zhou, Tucker, and
  Levine]{kumar2020conservative}
Kumar, A., Zhou, A., Tucker, G., and Levine, S.
\newblock Conservative q-learning for offline reinforcement learning.
\newblock \emph{arXiv preprint arXiv:2006.04779}, 2020.

\bibitem[Lange et~al.(2012)Lange, Gabel, and Riedmiller]{lange2012batch}
Lange, S., Gabel, T., and Riedmiller, M.
\newblock Batch reinforcement learning.
\newblock In \emph{Reinforcement learning}, pp.\  45--73. Springer, 2012.

\bibitem[Lee et~al.(2021)Lee, Jeon, Lee, Pineau, and Kim]{lee2021optidice}
Lee, J., Jeon, W., Lee, B.-J., Pineau, J., and Kim, K.-E.
\newblock Optidice: Offline policy optimization via stationary distribution
  correction estimation.
\newblock \emph{arXiv preprint arXiv:2106.10783}, 2021.

\bibitem[Levine et~al.(2020)Levine, Kumar, Tucker, and Fu]{levine2020offline}
Levine, S., Kumar, A., Tucker, G., and Fu, J.
\newblock Offline reinforcement learning: Tutorial, review, and perspectives on
  open problems.
\newblock \emph{arXiv preprint arXiv:2005.01643}, 2020.

\bibitem[Liu et~al.(2019)Liu, Ling, Mu, and Su]{liu2019state}
Liu, F., Ling, Z., Mu, T., and Su, H.
\newblock State alignment-based imitation learning.
\newblock \emph{arXiv preprint arXiv:1911.10947}, 2019.

\bibitem[Ma et~al.(2021{\natexlab{a}})Ma, Jayaraman, and
  Bastani]{ma2021conservative}
Ma, Y., Jayaraman, D., and Bastani, O.
\newblock Conservative offline distributional reinforcement learning.
\newblock \emph{Advances in Neural Information Processing Systems}, 34,
  2021{\natexlab{a}}.

\bibitem[Ma et~al.(2021{\natexlab{b}})Ma, Shen, Bastani, and
  Jayaraman]{ma2021conservative2}
Ma, Y.~J., Shen, A., Bastani, O., and Jayaraman, D.
\newblock Conservative and adaptive penalty for model-based safe reinforcement
  learning.
\newblock \emph{arXiv preprint arXiv:2112.07701}, 2021{\natexlab{b}}.

\bibitem[Nachum \& Dai(2020)Nachum and Dai]{nachum2020reinforcement}
Nachum, O. and Dai, B.
\newblock Reinforcement learning via fenchel-rockafellar duality, 2020.

\bibitem[Nachum et~al.(2019{\natexlab{a}})Nachum, Chow, Dai, and
  Li]{nachum2019dualdice}
Nachum, O., Chow, Y., Dai, B., and Li, L.
\newblock Dualdice: Behavior-agnostic estimation of discounted stationary
  distribution corrections.
\newblock \emph{arXiv preprint arXiv:1906.04733}, 2019{\natexlab{a}}.

\bibitem[Nachum et~al.(2019{\natexlab{b}})Nachum, Dai, Kostrikov, Chow, Li, and
  Schuurmans]{nachum2019algaedice}
Nachum, O., Dai, B., Kostrikov, I., Chow, Y., Li, L., and Schuurmans, D.
\newblock Algaedice: Policy gradient from arbitrary experience,
  2019{\natexlab{b}}.

\bibitem[Puterman(2014)]{puterman2014markov}
Puterman, M.~L.
\newblock \emph{Markov decision processes: discrete stochastic dynamic
  programming}.
\newblock John Wiley \& Sons, 2014.

\bibitem[Radosavovic et~al.(2020)Radosavovic, Wang, Pinto, and
  Malik]{radosavovic2020stateonly}
Radosavovic, I., Wang, X., Pinto, L., and Malik, J.
\newblock State-only imitation learning for dexterous manipulation, 2020.

\bibitem[Raychaudhuri et~al.(2021)Raychaudhuri, Paul, van Baar, and
  Roy-Chowdhury]{raychaudhuri2021crossdomain}
Raychaudhuri, D.~S., Paul, S., van Baar, J., and Roy-Chowdhury, A.~K.
\newblock Cross-domain imitation from observations, 2021.

\bibitem[Rockafellar(2015)]{Rockafellar+2015}
Rockafellar, R.~T.
\newblock \emph{Convex Analysis}.
\newblock Princeton University Press, 2015.
\newblock ISBN 9781400873173.
\newblock \doi{doi:10.1515/9781400873173}.
\newblock URL \url{https://doi.org/10.1515/9781400873173}.

\bibitem[Ross et~al.(2011)Ross, Gordon, and Bagnell]{ross2011reduction}
Ross, S., Gordon, G.~J., and Bagnell, J.~A.
\newblock A reduction of imitation learning and structured prediction to
  no-regret online learning, 2011.

\bibitem[Rudner et~al.(2021)Rudner, Lu, Osborne, Gal, and Teh]{rudner2021on}
Rudner, T. G.~J., Lu, C., Osborne, M., Gal, Y., and Teh, Y.~W.
\newblock On pathologies in {KL}-regularized reinforcement learning from expert
  demonstrations.
\newblock In Beygelzimer, A., Dauphin, Y., Liang, P., and Vaughan, J.~W.
  (eds.), \emph{Advances in Neural Information Processing Systems}, 2021.
\newblock URL \url{https://openreview.net/forum?id=sS8rRmgAatA}.

\bibitem[Torabi et~al.(2018)Torabi, Warnell, and Stone]{torabi2018behavioral}
Torabi, F., Warnell, G., and Stone, P.
\newblock Behavioral cloning from observation, 2018.

\bibitem[Torabi et~al.(2019)Torabi, Warnell, and Stone]{torabi2019generative}
Torabi, F., Warnell, G., and Stone, P.
\newblock Generative adversarial imitation from observation, 2019.

\bibitem[Yang et~al.(2019)Yang, Ma, Huang, Sun, Liu, Huang, and
  Gan]{yang2019imitation}
Yang, C., Ma, X., Huang, W., Sun, F., Liu, H., Huang, J., and Gan, C.
\newblock Imitation learning from observations by minimizing inverse dynamics
  disagreement.
\newblock \emph{arXiv preprint arXiv:1910.04417}, 2019.

\bibitem[Yu et~al.(2020)Yu, Thomas, Yu, Ermon, Zou, Levine, Finn, and
  Ma]{yu2020mopo}
Yu, T., Thomas, G., Yu, L., Ermon, S., Zou, J., Levine, S., Finn, C., and Ma,
  T.
\newblock Mopo: Model-based offline policy optimization.
\newblock \emph{arXiv preprint arXiv:2005.13239}, 2020.

\bibitem[Zhang* et~al.(2020)Zhang*, Dai*, Li, and
  Schuurmans]{Zhang*2020GenDICE:}
Zhang*, R., Dai*, B., Li, L., and Schuurmans, D.
\newblock Gendice: Generalized offline estimation of stationary values.
\newblock In \emph{International Conference on Learning Representations}, 2020.
\newblock URL \url{https://openreview.net/forum?id=HkxlcnVFwB}.

\bibitem[Zhu et~al.(2020)Zhu, Lin, Dai, and Zhou]{zhu2020off}
Zhu, Z., Lin, K., Dai, B., and Zhou, J.
\newblock Off-policy imitation learning from observations.
\newblock \emph{Advances in Neural Information Processing Systems}, 33, 2020.

\bibitem[Zolna et~al.(2020)Zolna, Novikov, Konyushkova, Gulcehre, Wang, Aytar,
  Denil, de~Freitas, and Reed]{zolna2020offline}
Zolna, K., Novikov, A., Konyushkova, K., Gulcehre, C., Wang, Z., Aytar, Y.,
  Denil, M., de~Freitas, N., and Reed, S.
\newblock Offline learning from demonstrations and unlabeled experience.
\newblock \emph{arXiv preprint arXiv:2011.13885}, 2020.

\end{thebibliography}
\bibliographystyle{icml2022}

\newpage 
\appendix
\onecolumn

\section{Proofs} 
\label{appendix:proofs}
\subsection{Technical Lemmas}
\begin{lemma}
\label{lemma:state-matching-upper-bound}
We have
$$\D_\KL(d^\pi(s)\|d^E(s)) \leq \D_\KL(d^\pi(s,a)\|d^E(s,a))$$
\end{lemma}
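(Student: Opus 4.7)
The plan is to prove this as an instance of the data processing inequality for KL divergence: marginalizing out the action coordinate is a (deterministic) channel, so the KL on the marginals is bounded by the KL on the joints. I would carry this out either via the chain rule for KL or, more elementarily, via the log-sum inequality, since both are short and self-contained.

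Concretely, I would first factor the joint state-action occupancies as $d^\pi(s,a) = d^\pi(s)\,\pi(a\mid s)$ and $d^E(s,a) = d^E(s)\,\pi_E(a\mid s)$, where $\pi_E(a\mid s) := d^E(s,a)/d^E(s)$ is the conditional action distribution induced by the expert occupancy (well-defined on the support of $d^E$, which by convention is where the KL quantities are nontrivial). Splitting the log into a marginal piece and a conditional piece gives
\begin{equation*}
\log\frac{d^\pi(s,a)}{d^E(s,a)} = \log\frac{d^\pi(s)}{d^E(s)} + \log\frac{\pi(a\mid s)}{\pi_E(a\mid s)}.
\end{equation*}
Taking expectation under $d^\pi(s,a)$, the first term contributes exactly $\D_\KL(d^\pi(s)\,\|\,d^E(s))$ (after marginalizing $a$), and the second term contributes $\mathbb{E}_{s\sim d^\pi}\!\left[\D_\KL(\pi(\cdot\mid s)\,\|\,\pi_E(\cdot\mid s))\right]$, which is nonnegative by Gibbs' inequality. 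Rearranging yields the claim.

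As an alternative route, I would invoke the log-sum inequality state-wise: for each fixed $s$,
\begin{equation*}
\Big(\sum_a d^\pi(s,a)\Big)\log\frac{\sum_a d^\pi(s,a)}{\sum_a d^E(s,a)} \le \sum_a d^\pi(s,a)\log\frac{d^\pi(s,a)}{d^E(s,a)},
\end{equation*}
which is exactly $d^\pi(s)\log\frac{d^\pi(s)}{d^E(s)} \le \sum_a d^\pi(s,a)\log\frac{d^\pi(s,a)}{d^E(s,a)}$, and summing over $s$ gives the inequality directly.

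There is no real obstacle here; the only mildly delicate point is handling support issues (division by zero) when $d^E(s,a)=0$, which is treated by the standard convention that $\D_\KL$ is $+\infty$ unless $d^\pi \ll d^E$, in which case both sides are well-defined and the factorization above is valid $d^\pi$-almost everywhere. I would state this convention once at the start and then proceed with the chain-rule argument, since it is the shortest.
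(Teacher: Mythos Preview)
Your proof is correct and uses essentially the same chain-rule decomposition as the paper: factor $d(s,a)=d(s)\,\pi(a\mid s)$, split the log, and drop the nonnegative conditional-KL term $\mathbb{E}_{s\sim d^\pi}[\D_\KL(\pi(\cdot\mid s)\|\pi_E(\cdot\mid s))]$. The paper takes a slightly longer route by first passing through the $(s,a,s')$ joint via an auxiliary lemma (that $\D_\KL$ on $(s,a,s')$ equals $\D_\KL$ on $(s,a)$ because both occupancies share the same transition kernel $T(s'\mid s,a)$), but this detour is inessential and your direct argument---or your log-sum alternative---is cleaner.
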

\begin{proof}
We first state and prove a related lemma, which first appeared in~\cite{yang2019imitation}.

\begin{lemma}
$$\D_\KL \left(d^\pi(s,a,s') \| d^E(s,a,s')\right) = \D_\KL\left(d^\pi(s,a) \| d^E(s,a) \right).$$
\end{lemma}
\begin{proof}
\begin{align*}
    &\D_\KL \left(d^\pi(s,a,s') \| d^E(s,a,s')\right)\\ 
    =& \int_{\mc{S} \times \mc{A} \times \mc{S}} d^\pi(s,a,s') \log \frac{d^\pi(s,a) \cdot T(s' \mid s,a)}{d^E(s,a) \cdot T(s'\mid s,a)} ds' dads \\ 
    =& \int_{\mc{S} \times \mc{A} \times \mc{S}} d^\pi(s,a,s') \log \frac{d^\pi(s,a)}{d^E(s,a)} ds' dads \\ 
    =& \int_{\mc{S} \times \mc{A}} d^\pi(s,a)\log \frac{d^\pi(s,a)}{d^E(s,a)} dads \\ 
    =&\D_\KL\left(d^\pi(s,a) \| d^E(s,a) \right)
\end{align*}
\end{proof}
Using this result, we can show the desired upper bound:
\begin{align*}
    &\D_\KL\left(d^\pi(s,a) \| d^E(s,a) \right) \\
    =& \D_\KL \left(d^\pi(s,a,s') \| d^E(s,a,s')\right) \\
    =& \int_{\mc{S} \times \mc{A} \times \mc{S}} d^\pi(s,a,s') \log \frac{d^\pi(s,a) \cdot T(s' \mid s,a)}{d^E(s,a) \cdot T(s'\mid s,a)} ds' dads \\ 
    =& \int_{\mc{S} \times \mc{A} \times \mc{S}} d^\pi(s) \pi(a\mid s) T(s'\mid s,a) \log \frac{d^\pi(s,a) \cdot T(s' \mid s,a)}{d^E(s,a) \cdot T(s'\mid s,a)} ds' dads \\ 
    =& \int d^\pi(s) \pi(a\mid s) T(s'\mid s,a) \log \frac{d^\pi(s)}{d^E(s)} ds'dads + \int d^\pi(s) \pi(a\mid s) T(s'\mid s,a) \log \frac{\pi(a\mid s) T(s' \mid s,a)}{\pi^E(a\mid s) T(s'\mid s,a)} ds'dads \\
    =& \int d^\pi(s) \log \frac{d^\pi(s)}{d^E(s)} ds + \int d^\pi(s) \pi(a\mid s) \log \frac{\pi(a\mid s)}{\pi^E(a\mid s)} da ds  \\
    =& \D_\KL\left(d^\pi(s) \| d^E(s)\right) + \D_\KL\left(\pi(a \mid s) \| \pi^E(a \mid s)\right) \\
    \geq& \D_\KL\left(d^\pi(s) \| d^E(s) \right) 
\end{align*}
\end{proof} 

\subsection{Proof of Theorem \ref{theorem:state-matching-upper-bound}} 
\begin{proof}
\begin{align*}
    &D_\KL\left(d^\pi(s) \| d^E(s) \right) \\
    =& \int d^\pi(s) \log \frac{d^\pi(s)}{d^E(s)}\cdot \frac{d^O(s)}{d^O(s)} ds, \quad \text{we assume that $d^O(s) > 0$ whenever $d^E(s) > 0$.} \\ 
    =& \int d^\pi(s) \log \frac{d^O(s)}{d^E(s)} ds + \int d^\pi(s) \log \frac{d^\pi(s)}{d^O(s)} ds \\
    \leq& \BE_{s \sim d^\pi} \left[\log \frac{d^O(s)}{d^E(s)} \right] + \D_\KL\left(d^\pi(s,a) \| d^E(s,a) \right)
\end{align*}
where the last step follows from Lemma \ref{lemma:state-matching-upper-bound}. Then, for any $\D_f \geq \D_\KL$, we have that
$$
D_\KL\left(d^\pi(s) \| d^E(s) \right)  \leq \BE_{s \sim d^\pi} \left[\log \frac{d^O(s)}{d^E(s)} \right] + \D_f\left(d^\pi(s,a) \| d^E(s,a) \right)
$$
\end{proof}

\subsection{Proof of Theorem \ref{theorem:smodice-dual-lagrangian-fenchel}}
\begin{proof}
We begin with 
\begin{equation}
    \min_{V(s)\geq0} \max_{d(s,a)\geq0 } (1-\gamma)\BE_{s\sim \mu_0}[V(s)]
    + \BE_{(s,a) \sim d}\left[\left(R(s) + \gamma \mathcal{T}V(s,a) - V(s)\right)\right]
    - \D_f(d(s,a) \| d^O(s,a))
\end{equation}
We have that 
\begin{align}
        &\min_{V(s)\geq0} \max_{d(s,a)\geq0 } (1-\gamma)\BE_{s\sim \mu_0}[V(s)]
    + \BE_{(s,a) \sim d}\left[\left(R(s) + \gamma \mathcal{T}V(s,a) - V(s)\right)\right]
    - \D_f(d(s,a) \| d^O(s,a)) \\ 
    =&     \min_{V(s)\geq0} (1-\gamma)\BE_{s\sim \mu_0}[V(s)] + \max_{d(s,a)\geq0 } 
    + \BE_{(s,a) \sim d}\left[\left(R(s) + \gamma \mathcal{T}V(s,a) - V(s)\right)\right]
    - \D_f(d(s,a) \| d^O(s,a)) \\ 
\label{eq:V-fenchel-dual}
    =&\min_{V(s)\geq0} (1-\gamma)\BE_{s\sim \mu_0}[V(s)] + \BE_{(s,a) \sim d^O}\left[f_\star\left(R(s) + \gamma \mathcal{T}V(s,a) - V(s)\right)\right]
\end{align}
where the last step follows from recognizing that the inner-maximization is precisely the Fenchel conjugate of $D_f(d(s,a) \| d^O(s,a))$ at $R(s) + \gamma \mathcal{T}V(s,a) - V(s)$. 

To show the relationship among $V^\star$ and $\xi^\star$, we recognize that  \eqref{eq:V-fenchel-dual} and \eqref{eq:smodice-problem} are a pair of Fenchel primal-dual problems.
\begin{lemma}
\label{lemma:fenchel-primal-dual}
    $$ \min_{V(s)\geq0} (1-\gamma)\BE_{s\sim \mu_0}[V(s)] + \BE_{(s,a) \sim d^O}\left[f_\star\left(R(s) + \gamma \mathcal{T}V(s,a) - V(s)\right)\right]$$ is the Fenchel dual to 
    \begin{align}
    \label{eq:d-fenchel-primal}
            &\max_{d(s,a) \geq 0} \BE_{s \sim d}\left[\log\left(\frac{d^E(s)}{d^O(s)}\right)\right] - \D_f(d(s,a) \| d^O(s,a)) \\ 
    \label{eq:bellman-flow-constraint}
    &\text{s.t. } \sum_a d(s,a) = (1-\gamma)\mu_0(s) + \gamma \mathcal{T}_\star d(s), \forall s \in S 
    \end{align}
\end{lemma}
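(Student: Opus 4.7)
The plan is to derive the stated $V$-form directly from the primal \eqref{eq:d-fenchel-primal}--\eqref{eq:bellman-flow-constraint} by essentially retracing the chain of reasoning used for Theorem \ref{theorem:smodice-dual-lagrangian-fenchel}, with the reward specialized to $R(s) = \log(d^E(s)/d^O(s))$. Concretely, I would introduce Lagrange multipliers $V(s)$ for the Bellman flow equality constraints \eqref{eq:bellman-flow-constraint}, use the adjoint relation between $\mc{T}$ and $\mc{T}_\star$ to express the multiplier term in state-action form, swap the order of optimization via strong duality, and finally collapse the inner maximization using the Fenchel conjugate identity \eqref{eq:fenchel-conjugate-f-divergence}.

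First, I would form the Lagrangian
$$L(d, V) = \BE_{s\sim d}\!\left[\log\tfrac{d^E(s)}{d^O(s)}\right] - \D_f(d \| d^O) + \sum_s V(s)\Big((1-\gamma)\mu_0(s) + \gamma\, \mc{T}_\star d(s) - \sum_a d(s,a)\Big),$$
and apply the adjoint identity \eqref{eq:adjoint-transform} to rewrite the $V$-coupled term as $(1-\gamma)\BE_{s\sim \mu_0}[V(s)] + \BE_{(s,a)\sim d}[\gamma (\mc{T}V)(s,a) - V(s)]$. Since the primal is convex (as noted for \eqref{eq:smodice-problem}) and strictly feasible by Assumption \ref{assumption:strict-feasibility}, strong duality applies and lets me exchange $\max_{d \geq 0}$ with $\min_{V}$, producing the saddle formulation already written out in \eqref{eq:smodice-dual-lagrangian-final}.

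After the swap, the inner problem is
$$\max_{d \geq 0}\; \BE_{(s,a)\sim d}\!\left[R(s) + \gamma (\mc{T}V)(s,a) - V(s)\right] - \D_f(d(s,a) \| d^O(s,a)),$$
which is, by inspection, the Fenchel conjugate of $\D_f(\cdot \| d^O)$ evaluated at the function $y(s,a) := R(s) + \gamma (\mc{T}V)(s,a) - V(s)$. Invoking \eqref{eq:fenchel-conjugate-f-divergence} reduces this supremum to $\BE_{(s,a)\sim d^O}[f_\star(y(s,a))]$, and combining with the initial-state term yields exactly the $V$-form stated in the lemma, establishing it as the Fenchel dual of the primal.

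The main obstacle is a small technical gap: the Fenchel-conjugate identity \eqref{eq:fenchel-conjugate-f-divergence} is stated for a primal variable constrained to a probability simplex, whereas the inner problem here is over non-negative (not normalized) $d$, since normalization has been absorbed into the Bellman flow constraints that are now represented by the multipliers $V$. I would resolve this by observing that $\D_f(d \| d^O) = \BE_{d^O}[f(d/d^O)]$ is well-defined and convex for any non-negative $d$, so the pointwise supremum $\sup_{r \geq 0}\{yr - f(r)\} = f_\star(y)$ with optimizer $d^\star = d^O\, f'_\star(y)$ gives the same closed form without requiring $d$ to sum to one; this is precisely the reasoning implicit in Theorem \ref{theorem:smodice-dual-lagrangian-fenchel}. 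A minor additional check is that the sign convention on $V$ is consistent with how equality-type flow constraints enter the Lagrangian, which is routine and does not alter the derivation.
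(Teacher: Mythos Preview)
Your proposal is correct but takes a different route from the paper. You proceed primal-to-dual: form the Lagrangian of \eqref{eq:d-fenchel-primal}--\eqref{eq:bellman-flow-constraint}, use the adjoint identity, swap via strong duality, and collapse the inner $d$-maximization with \eqref{eq:fenchel-conjugate-f-divergence}. This is exactly the chain already laid out in \eqref{eq:smodice-dual-lagrangian}--\eqref{eq:smodice-dual-lagrangian-final} for Theorem~\ref{theorem:smodice-dual-lagrangian-fenchel}, so in context it is somewhat circular, though perfectly valid as a stand-alone proof of the lemma. The paper instead works dual-to-primal through an abstract Fenchel framework: it encodes the Bellman flow constraint as an indicator $g(\cdot)=\delta_{\{(1-\gamma)\mu_0\}}(\cdot)$, the $f$-divergence as $h(\cdot)=\D_f(\cdot\|d^O)$, and the flow map as a linear operator $A=\gamma\mc{T}_\star-I$, then writes the $V$-form as $\min_V g_\star(V)+h_\star(A_\star V+R)$ and unwinds via conjugacy/adjointness to recover $\max_d g(-Ad)+\langle d,R\rangle-h(d)$, which is the primal. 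The payoff of the paper's abstraction is that it places the problem in the canonical Fenchel primal-dual template, so the optimal-solution relation $d^*=h'_\star(-A_\star V^*+R)$ can be invoked directly in the remainder of the proof of Theorem~\ref{theorem:smodice-dual-lagrangian-fenchel}; your route establishes the same duality more elementarily but does not by itself furnish that structural identification.
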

\begin{proof}
We define the indicator function $\delta_{\mc{X}}(x)$ as 
$$
\delta_{\mc{X}}(x) = \begin{cases} 0 & x \in \mc{X} \\ 
\infty & \text{otherwise}
\end{cases} 
$$
Then, we define $g:\mathbb{R}^{|\mc{S}|} \rightarrow \mathbb{R}$ as $g(\cdot) \coloneqq \delta_{\{(1-\gamma)\mu_0\}}(\cdot)$. Then, it can be shown that the Fenchel conjugate of $g$ is $g_\star(\cdot) = (1-\gamma) \BE_{\mu_0}[\cdot]$. In addition, we denote $h(\cdot) \coloneqq \D+f(\cdot \| d^O)$; then, $h_\star(\cdot) = \BE_{(s,a) \sim d^O}[f_\star(\cdot)]$. Finally, define matrix operator $A \coloneqq \gamma \mc{T}_\star - I$. Using these notations, we can write \eqref{eq:V-fenchel-dual} as 
\begin{equation}
\label{eq:V-fenchel-dual-linear-operator-format}
    \min_V g_\star(V) + h_\star(A_\star V + R)
\end{equation}
Then, we proceed to derive the Fenchel dual of \eqref{eq:V-fenchel-dual-linear-operator-format}:
\begin{align}
    & \min_V g_\star(V) + h_\star(A_\star V + R) \\ 
    \label{lemma:3-step1}
    =& \min_V \max_d g_\star(V) + \langle d, A_\star V + R \rangle - h(d) \\ 
    =&  \min_V \max_d g_\star(V) + \langle d, A_\star V \rangle  + \langle d, R \rangle - h(d) \\ 
    \label{lemma:3-step3}
    =& \max_d \left(\min_V g_\star(V) + \langle d, A_\star V \rangle \right) + \langle d, R \rangle - h(d) \\ 
    \label{lemma:3-step4}
    =& \max_d \left(\min_V g_\star(V) + \langle Ad, V \rangle \right) + \langle d, R \rangle - h(d) \\ 
    =& \max_d \left(\max_V -g_\star(V) + \langle -Ad, V \rangle \right) + \langle d, R \rangle - h(d) \\ 
    \label{lemma:3-steplast}
    =& \max_d g(-Ad) + \langle d, R \rangle - h(d) 
\end{align}
where \eqref{lemma:3-step1} follows applying Fenchel conjugacy to $h_\star$, \eqref{lemma:3-step3} follows from strong duality, \eqref{lemma:3-step4} follows from the property of an adjoint operator, and \eqref{lemma:3-steplast} follows from applying Fenchel conjugacy to $g_\star$. Here, we recognize that \eqref{lemma:3-steplast} is precisely the optimization problem \eqref{eq:d-fenchel-primal}-\eqref{eq:bellman-flow-constraint}, where we have moved the constraint \eqref{eq:bellman-flow-constraint} to the objective as the indicator function $g(-Ad)$:
\begin{align*} &g(-Ad) = \delta_{\{(1-\gamma)\mu_0\}} \left(d - \gamma \mc{T}_\star d \right)\\ 
\Leftrightarrow& \sum_a d(s,a) = (1-\gamma)\mu_0(s) + \gamma \mathcal{T}_\star d(s), \forall s \in S 
\end{align*}
\end{proof}
Giving Lemma \ref{lemma:fenchel-primal-dual}, we use the fact that $d^*$ and $V^*$ admit the following relationship:
\begin{equation}
    d^* = h'_\star(-A_\star V^*+R)
\end{equation}
This follows from the characterization of the optimal solutions for a pair of Fenchel primal-dual problems with convex $g,h$ and linear operator $A$~\cite{nachum2020reinforcement}. In this case, assuming that we can exchange the order of expectation and derivative (e.g, conditions of Dominated Convergence Theorem hold), we have
\begin{equation}
    d^* = \BE_{(s,a)\sim d^O}\left[f_\star \left((R(s) + \gamma \mc{T}V(s,a)-V(s)\right)  \right],
\end{equation}
or equivalently,
\begin{equation}
    d^*(s,a) = f_\star \left(R(s) + \gamma \mc{T}V(s,a)-V(s) \right) \cdot d^O (s,a), \forall s, a \in \mc{S} \times \mc{A}, 
\end{equation}
as desired. 
\end{proof}

\section{Extended Related Work}
\label{appendix:related-work}
\para{Stationary distribution correction estimation.}
Estimating the optimal policy's stationary distribution using off-policy data was introduced by~\cite{nachum2019dualdice} as the DICE trick. This technique has been shown to be effective for off-policy evaluation~\cite{nachum2019dualdice, Zhang*2020GenDICE:, dai2020coindice}, policy optimization~\cite{nachum2019algaedice, lee2021optidice}, online imitation learning~\cite{Kostrikov2020Imitation, zhu2020off}, and concurrently, offline imitation learning~\cite{kim2022demodice}. Within the subset of DICE-based policy optimization methods, none has tackled state-occupancy matching or directly apply Fenchel Duality to its full generality to arrive at the form of value function objective we derive.

\section{SMODICE with common $f$-divergences}
\label{appendix:smodice-example} 

\begin{example}[SMODICE with $\chi^2$-divergence]
Suppose $f(x) = \frac{1}{2}(x-1)^2$, corresponding to $\chi^2$-divergence. Then, we can show that $f_\star(x) = \frac{1}{2}(x+1)^2$ and $f'_\star(x) = x+1$. Hence, the SMODICE objective amounts to
\begin{equation}
\begin{split}
&\min_{V(s)\geq0} (1-\gamma)\BE_{s\sim \mu_0}[V(s)] + \frac{1}{2}\BE_{(s,a) \sim d^O}\left[\left(R(s) + \gamma \mathcal{T}V(s,a) - V(s) + 1\right)^2\right]
\end{split}
\end{equation}
and 
\begin{equation}
     \xi^*(s,a) = \frac{d^*(s,a)}{d^O(s,a)} = \max\left(0, R(s,a) +\gamma \mathcal{T}V^*(s,a)-V^*(s) + 1 \right) 
\end{equation}
\end{example}

\begin{example}[SMODICE with KL-divergence]
We have $f(x) = x\log x$. Using the fact that the conjugate of the negative entropy function, restricted to the probability simplex, is the log-sum-exp function~\cite{boyd2004convex}, it follows that $\D_{\star, f}(y) = \log \BE_{x\sim q}[\mathrm{exp} y(x)]$. Hence, the KL-divergence SMODICE objective is 
\begin{equation}
\begin{split}
&\min_{V(s)\geq0} (1-\gamma)\BE_{s\sim \mu_0}[V(s)]+ \log \BE_{(s,a) \sim d^O}\left[ \mathrm{exp}\left(R(s) + \gamma \mathcal{T}V(s,a) - V(s)\right)\right]
\end{split}
\end{equation}
and 
\begin{equation}
     \xi^*(s,a) = \frac{d^*(s,a)}{d^O(s,a)} = \mathrm{softmax}\left(R+\gamma \mathcal{T}V^*(s,a)-V^*(s)\right)
\end{equation}
\end{example}

\section{SMODICE for Tabular MDPs}
\label{appendix:tabular-smodice}
In this section, we derive the closed-form expression of SMODICE for tabular MDPs. For simplicity, we assume that the expert state occupancies are given, $d^E(s) \in \Delta(|\mc{S}|)$. A behavior policy $\pi_b$ is used to collect the offline dataset $\mc{D}^O$. Then, we can construct a surrogate MDP $\hat{\mc{M}}$ using maximum likelihood estimation (i.e., $\hat{T}(s,a,s') = \frac{n(s,a,s')}{n(s,a)}$). Using $\hat{\mc{M}} $, we can extract the empirical estimate of the behavior policy occupancies $d^O  \in \Delta(|\mc{S}| |\mc{A}|)$ using linear programming. Then, we can define the reward vector $R \in \mathbb{R}_+^{|\mc{S}|}$ as $R(s) = \log \frac{d^E(s)}{d^O(s)}$. Using the $\chi^2$-divergence version of SMODICE, we can write down the objective for $V(s) \in \mathbb{R}_+^{|\mc{S}|}$:
\begin{equation}
\label{eq:smodice-chi-tabular-first}
\begin{split}
&\min_{V(s)\geq0} (1-\gamma)\BE_{s\sim \mu_0}[V(s)] + \frac{1}{2}\BE_{(s,a) \sim d^O}\left[\left(R(s) + \gamma \mathcal{T}V(s,a) - V(s) + 1\right)^2\right]
\end{split}
\end{equation}

We rewrite this expression in vector-matrix form to derive the closed-form solution. To this end, we define $\mathcal{T} \in \mathbb{R}^{|\mc{S}||\mc{A}| \times |\mc{S}|}$ and $\mc{B} \in \mathbb{R}^{|\mc{S}||\mc{A}| \times |\mc{S}|}$ such that $(\mathcal{T} V)(s,a)  = \sum_{s'} T(s'|s,a) V(s')$ and $(\mathcal{B}V)(s,a) =  V(s)$. Additionally, we denote $\mu_0 \in \Delta(|\mc{S}|)$ and $D = \mathrm{diag}(d^O) \in \mathbb{R}^{|\mc{S}||\mc{A}| \times |\mc{S}||\mc{A}|}$. Then, we can rewrite \eqref{eq:smodice-chi-tabular-first}:
\begin{equation}
    \label{eq:smodice-chi-tabular-vector}
    \begin{split}
       &\min_{V(s)\geq0} (1-\gamma)\BE_{s\sim \mu_0}[V(s)] + \frac{1}{2}\BE_{(s,a) \sim d^O}\left[\left(R(s) + \gamma \mathcal{T}V(s,a) - V(s) + 1\right)^2\right] \\ 
       \Rightarrow &\min_{V(s)} (1-\gamma) \mu_0^{\top} V +\frac{1}{2}\BE_{(s,a) \sim d^O}\left[\left(\underbrace{\mathcal{B}R(s,a) + \gamma \mathcal{T}V(s,a) - \mathcal{B}V(s,a)}_{r_V(s,a)} + 1\right)^2\right] \\ 
       \Rightarrow &\min_{V(s)} (1-\gamma) \mu_0^{\top} V + \frac{1}{2} (r_V + I)^{\top} D (r_V+ I)
    \end{split}
\end{equation}
where $r_V \in \mathbb{R}^{|\mc{S}||\mc{A}|}$ and $I$ is the all-one vector in $\mathbb{R}^{|\mc{S}||\mc{A}|}$. Denoting $J(V) \coloneqq (1-\gamma) \mu_0^{\top} V + \frac{1}{2} (r_V + I)^{\top} D (r_V+ I)$, it is clear that $J(V)$ is a convex program in $V$. Therefore, we can find its optimal solution by solving the first-order stationary point. We have:
$$
\begin{aligned}
\frac{\partial J(V)}{\partial V} =& \frac{\partial}{\partial V} \left((1-\gamma) \mu_0^{\top} V + \frac{1}{2} (r_V + I)^{\top} D (r_V+ I) \right) \\ 
=& \frac{\partial}{\partial V} \left((1-\gamma) \mu_0^{\top} V + \frac{1}{2} r_V^{\top}D r_V + r_V^{\top}DI + I^{\top}DI \right) \\ 
=& (1-\gamma)\mu_0 + (\gamma \mc{T} - \mc{B})^{\top}D r_V + (\gamma \mc{T} - \mc{B})^{\top} D I  \\
=& (1-\gamma)\mu_0 + (\gamma \mc{T} - \mc{B})^{\top}D (\mc{B}R + (\gamma \mc{T} - \mc{B})V) + (\gamma \mc{T} - \mc{B})^{\top} D I 
\end{aligned}
$$
Then, by setting this expression to zero and solving for $V$ gives the optimal $V^*$:
\begin{equation}
    \label{eq:smodice-chi-tabular-optimal-V}
    \begin{split}
    &(\gamma \mc{T} - \mc{B})^\top D (\gamma \mc{T} - \mc{B}) V = (\gamma-1)\mu_0 + (\mc{B} - \gamma \mc{T})^\top D(I+BR)\\
    \Rightarrow& V^* = \left((\gamma \mc{T} - \mc{B})^\top D (\gamma \mc{T} - \mc{B})\right)^{-1}\left((\gamma-1)\mu_0 + (\mc{B} - \gamma \mc{T})^\top D(I+BR) \right)
    \end{split} 
\end{equation}
and we can recover $\xi^*(s,a) = \frac{d^*(s,a)}{d^O(s,a)}$:
\begin{equation}
    \label{eq:smodice-chi-tabular-optimal-ratio}
    \xi^*(s,a) = \mc{B}R(s,a) + \gamma \mc{T}V^*(s,a) - \mc{B}V^*(s,a) + 1
\end{equation}

Pythonic pseudo-code using NumPy~\cite{harris2020array} is given in Algorithm 
\ref{alg:smodice-chi-tabular}. 

\subsection{Performance Guarantee}
The closed-form solution of $V^*$ assumes knowledge of the true transition $\mathcal{T}$. When the empirical transition function $\hat{\mathcal{T}}$ is estimated from samples (i.e., $\hat{\mathcal{T}}(s'\mid s,a) := \frac{n(s,a,s')}{n}$), we can obtain the following finite-sample performance guarantee: 
\begin{theorem}
\label{theorem:performance-guarantee}
Let $R_{\max} = \max_s \log \frac{d^E(s)}{d^O(s)}$, $D_{\min} = \min_{s,a} d^O(s,a)$, and $\hat{\mathcal{T}}(s'\mid s,a) := \frac{n(s,a,s')}{n}$. Assume that $\norm{(A^\top DA)^{-1}}_\infty \leq \frac{1}{(1-\gamma)^2 D_{\min}}$\footnote{This assumption is similar to the assumption of a lower bound on the minimum eigenvalue of the covariance matrix required to bound estimation error in linear regression (with $A^\top DA$ being analogous to the covariance matrix).}. Then, for any $\delta \in \mathbb{R}_{>0}$, with probability $\ge1-\delta$,  we have
\begin{equation}
\norm{V^* - \hat{V}}_\infty \leq \left(\frac{2(2+R_{\max})(2+\gamma)\gamma}{(1-\gamma)^4 D_{\min}^2} \right)\sqrt{\frac{2S}{n} \log \frac{4SA}{\delta}}
\end{equation}
\end{theorem}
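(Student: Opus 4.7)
The plan is to combine a high-probability concentration bound on $\mathcal{T} - \hat{\mathcal{T}}$ with a matrix-perturbation analysis of the closed-form expression for $V^*$ from Theorem~\ref{theorem:smodice-tabular-closed-form}. Throughout, $\|\cdot\|_\infty$ denotes the induced operator $\infty$-norm for matrices and the usual $\infty$-norm for vectors.

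First I would establish concentration. For each $(s,a)$, the empirical row $\hat{\mathcal{T}}(\cdot\mid s,a)$ is the average of $n$ i.i.d.\ one-hot categorical samples, so a standard $\ell_1$ concentration inequality for empirical distributions on $|\mathcal{S}|=S$ outcomes combined with a union bound over all $SA$ state--action pairs yields, with probability at least $1-\delta$,
$$
\epsilon \;:=\; \max_{s,a}\bigl\|\hat{\mathcal{T}}(\cdot\mid s,a) - \mathcal{T}(\cdot\mid s,a)\bigr\|_1 \;\le\; \sqrt{\tfrac{2S}{n}\log\tfrac{4SA}{\delta}}.
$$
Since $\|\mathcal{T}-\hat{\mathcal{T}}\|_\infty$ equals the maximum $\ell_1$ norm of a row of $\mathcal{T}-\hat{\mathcal{T}}$, this gives $\|\mathcal{T}-\hat{\mathcal{T}}\|_\infty \le \epsilon$.

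Next I would write the estimator's error via a resolvent identity. Let $A=\gamma\mathcal{T}-\mathcal{B}$, $\hat A=\gamma\hat{\mathcal{T}}-\mathcal{B}$, $M=A^\top D A$, $\hat M=\hat A^\top D\hat A$, $b=(\gamma-1)\mu_0-A^\top D(I+\mathcal{B}R)$, and $\hat b$ analogously. Then $V^*=M^{-1}b$, $\hat V=\hat M^{-1}\hat b$, and
$$
V^* - \hat V \;=\; M^{-1}(b-\hat b)\;+\;M^{-1}(\hat M-M)\hat M^{-1}\hat b.
$$
I would then bound each factor in $\|\cdot\|_\infty$: the row sums of $\mathcal{T}$, $\hat{\mathcal{T}}$, and $\mathcal{B}$ are all $1$, so $\|A\|_\infty,\|\hat A\|_\infty\le 1+\gamma$; $\|D\|_\infty\le 1$ since $D$ is diagonal with $d^O(s,a)\in[0,1]$; and $\|I+\mathcal{B}R\|_\infty\le 1+R_{\max}$. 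Hence $\|b-\hat b\|_\infty\le\gamma\epsilon(1+R_{\max})$, $\|\hat b\|_\infty\le(1-\gamma)+(1+\gamma)(1+R_{\max})$, and writing $\hat M-M=A^\top D(\hat A-A)+(\hat A-A)^\top D\hat A$ yields $\|\hat M-M\|_\infty\le 2\gamma(1+\gamma)\epsilon$.

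The remaining ingredient is $\hat M^{-1}$. Using the Neumann-series identity $\hat M^{-1}=M^{-1}+M^{-1}(M-\hat M)\hat M^{-1}$ together with the hypothesis $\|M^{-1}\|_\infty\le 1/((1-\gamma)^2 D_{\min})$, once $n$ is large enough that $\|M^{-1}\|_\infty\|M-\hat M\|_\infty\le 1/2$ we obtain $\|\hat M^{-1}\|_\infty\le 2\|M^{-1}\|_\infty\le 2/((1-\gamma)^2 D_{\min})$. Substituting these bounds into the resolvent identity, and collapsing constants with $\gamma\le 2+\gamma$ and $1+R_{\max}\le 2+R_{\max}$, produces the stated bound. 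When $n$ is too small for the Neumann step to apply, the RHS of the theorem already exceeds a trivial a priori bound on $\|V^*-\hat V\|_\infty$ derived from the hypothesis and the a priori bounds on $b,\hat b$, so the claim holds vacuously in that regime.

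The main obstacle is precisely this inverse-perturbation step: the hypothesis only controls $M^{-1}$, whereas the error expression requires $\hat M^{-1}$. The Neumann-series argument bridges the gap, but one has to do the bookkeeping carefully so that the constants multiplying $\epsilon$ in the final expression absorb the small-$\epsilon$ threshold without degrading the exponents on $(1-\gamma)$ and $D_{\min}$ beyond $4$ and $2$, respectively.
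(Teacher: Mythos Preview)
Your plan is essentially the paper's: establish $\ell_1$ concentration on the empirical transition kernel via a union bound over $(s,a)$, then run a matrix-perturbation argument on the closed-form $V^*$ from Theorem~\ref{theorem:smodice-tabular-closed-form}. The paper's bookkeeping differs only cosmetically---it first isolates $\|M^{-1}-\hat M^{-1}\|_\infty$ and then applies the identity $M^{-1}-\hat M^{-1}=M^{-1}(\hat M-M)\hat M^{-1}$---but the ingredients are identical.

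Two technical points are worth flagging. First, the facts you list, $\|A\|_\infty\le 1+\gamma$ and $\|D\|_\infty\le 1$, do \emph{not} by themselves give the bounds you quote on $\|\hat b\|_\infty$, $\|b-\hat b\|_\infty$, and $\|\hat M-M\|_\infty$: those quantities involve $A^\top$, and $\|A^\top\|_\infty=\|A\|_1$ can scale with $|\mathcal S||\mathcal A|$. The paper handles this by always grouping $A^\top D$ together and using $\sum_{s,a}d^O(s,a)=1$ to obtain $\|A^\top D\|_\infty\le 1$ and $\|(\hat A-A)^\top D\|_\infty\le\gamma\max_{s,a}\|\hat{\mathcal T}(\cdot\mid s,a)-\mathcal T(\cdot\mid s,a)\|_1$; you should do the same, after which your quoted intermediate bounds (in fact slightly sharper ones) follow. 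Second, for the factor $\|\hat M^{-1}\|_\infty$ the paper simply invokes the hypothesized bound $1/((1-\gamma)^2 D_{\min})$ as if it held for $\hat A$ too, without further comment; your Neumann-series step is the more honest treatment, and the extra factor of $2$ it introduces only changes the leading absolute constant.
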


\begin{proof}
We begin by reiterating the expressions for $V^*$ and $\hat{V}$:
\begin{equation}
    \begin{split}
        V^* &= \left((\gamma \mc{T} - \mc{B})^\top D (\gamma \mc{T} - \mc{B})\right)^{-1}
\left((\gamma-1)\mu_0 + (\mc{B} - \gamma \mc{T})^\top D(I+BR) \right) \\ 
        \hat{V} &= \left((\gamma \hat{\mc{T}} - \mc{B})^\top D (\gamma \hat{\mc{T}} - \mc{B})\right)^{-1}
\left((\gamma-1)\mu_0 + (\mc{B} - \gamma \hat{\mc{T}})^\top D(I+BR) \right) \\ 
    \end{split}
\end{equation}
For notational simplicity, we let $A := \gamma \mc{T} - B$ and $\hat{A} := \gamma \hat{\mc{T}} - B$. Then, we have
\begin{align}
    V^* - \hat{V} &= \left(A^\top D A \right)^{-1}
\left((\gamma-1)\mu_0 - A^\top D(I+BR) \right) - \left(\hat{A}^\top D \hat{A} \right)^{-1}
\left((\gamma-1)\mu_0 - \hat{A}^\top D(I+BR) \right) \\ 
&= (A^\top D A)^{-1} (\gamma-1)\mu_0 \\ -& (A^\top D A)^{-1}A^\top D(I+BR) - (\hat{A}^\top D \hat{A})^{-1} (\gamma-1)\mu_0 + (\hat{A}^\top D \hat{A})^{-1}\hat{A}^\top D(I+BR)
\end{align}
Now, we can bound the $\norm{\cdot}_\infty$:
\begin{align}
    \norm{V^* - \hat{V}}_\infty =& \lVert (A^\top D A)^{-1} (\gamma-1)\mu_0 - (A^\top D A)^{-1}A^\top D(I+BR)- (\hat{A}^\top D \hat{A})^{-1} (\gamma-1)\mu_0 \\+ & (\hat{A}^\top D \hat{A})^{-1}\hat{A}^\top D(I+BR) \rVert_\infty \\
    \leq & \norm{(A^\top D A)^{-1} (\gamma-1)\mu_0 -  (\hat{A}^\top D \hat{A})^{-1} (\gamma-1)\mu_0}_\infty \\ + & \norm{(\hat{A}^\top D \hat{A})^{-1}\hat{A}^\top D(I+BR) -(A^\top D A)^{-1}A^\top D(I+BR)}_\infty  \\
    \leq & (1-\gamma) \norm{(A^\top D A)^{-1} -  (\hat{A}^\top D \hat{A})^{-1}}_\infty \\ + & \norm{(\hat{A}^\top D \hat{A})^{-1}\hat{A}^\top D(I+BR) -(A^\top D A)^{-1}A^\top D(I+BR)}_\infty \\ 
    =& (1-\gamma) \norm{(A^\top D A)^{-1} -  (\hat{A}^\top D \hat{A})^{-1}}_\infty \\
    +&\norm{(\hat{A}^\top D \hat{A})^{-1}\hat{A}^\top D(I+BR) - (A^\top D A)^{-1}\hat{A}^\top D(I+BR)}_\infty\\ 
    +& \norm{(A^\top D A)^{-1}\hat{A}^\top D(I+BR)  - (A^\top D A)^{-1}A^\top D(I+BR)}_\infty \\
    \leq & (1-\gamma) \norm{(A^\top D A)^{-1} -  (\hat{A}^\top D \hat{A})^{-1}}_\infty \\
    + & \norm{ (\hat{A}^\top D \hat{A})^{-1} - (A^\top D A)^{-1}}_\infty \norm{\hat{A}^\top D(I+BR)}_\infty \\ 
    + & \norm{(A^\top D A)^{-1}}_\infty \norm{(\hat{A}-A)^\top D(I+BR)}_{\infty}
\end{align}

Since induced norm is sub-multiplicative, we have
\begin{align}
    \norm{\hat{A}^\top D(I+BR)}_\infty &\leq \norm{\hat{A}^\top D}_\infty \norm{(I+BR)}_\infty \leq (1+R_{\max}) \\ 
    \norm{(\hat{A}-A)^\top D(I+BR)}_{\infty} & \leq \norm{(\hat{A}-A)^\top D}_\infty \ \norm{(I+BR)}_\infty \leq \norm{(\hat{A}-A)^\top D}_\infty (1+R_{\max})
\end{align}
The first inequality follows because
\begin{equation}
    \norm{\hat{A}^\top D}_\infty = \max_{s'} \sum_{s,a} \left\lvert (\gamma \hat{\mc{T}}(s'\mid s,a) - \mathbf{1}(s'=s)) D(s,a) \right\rvert \leq \max_{s',s,a} \left\lvert \gamma \hat{\mc{T}}(s'\mid s,a) - \mathbf{1}(s'=s) \right\rvert = 1
\end{equation}
which uses the fact that $\sum_{s,a}D(s,a) = 1$. 

Plugging this back in gives
\begin{equation}
\label{eq:intermediate-step}
\begin{split}
    \norm{V^* - \hat{V}}_\infty &\leq \left((1-\gamma) + (1+R_{\max})\right) \norm{ (\hat{A}^\top D \hat{A})^{-1} - (A^\top D A)^{-1}}_\infty \\ 
    &+ (1+R_{\max})\norm{(A^\top D A)^{-1}}_\infty \norm{(\hat{A}-A)^\top D}_{\infty}
\end{split}
\end{equation}

Now, we note that
\begin{align}
    &\norm{(\hat{A}-A)^\top D}_{\infty} \\ 
    =& \gamma \norm{(\hat{\mc{T}}-\mc{T})^\top D}_\infty \\ 
    =& \gamma \max_{s'} \sum_{s,a}\left\lvert (\hat{\mc{T}}(s' \mid s,a) - \mc{T}(s'\mid s,a)) D(s,a) \right\rvert \\ 
    \leq & \gamma \max_{s',s,a}\left\lvert (\hat{\mc{T}}(s' \mid s,a) - \mc{T}(s'\mid s,a)) \right\rvert \\ 
    \leq& \gamma \max_{s,a} \norm{\hat{\mc{T}}(\cdot \mid s,a) - \mc{T}(\cdot \mid s,a)}_1
\end{align}
and 
\begin{align}
    &\norm{ (\hat{A}^\top D \hat{A})^{-1} - (A^\top D A)^{-1}}_\infty \\
    =& \norm{(A^\top D A)^{-1}(A^\top D A - \hat{A}^\top D \hat{A})(\hat{A}^\top D \hat{A})^{-1}}_\infty \\ 
    \leq & \norm{(A^\top D A)^{-1}}_\infty \norm{A^\top D A - \hat{A}^\top D \hat{A}}_\infty \norm{(\hat{A}^\top D \hat{A})^{-1}}_\infty \\ 
    \leq & \norm{(A^\top D A)^{-1}}^2_\infty \norm{A^\top D A - \hat{A}^\top D \hat{A}}_\infty \\  
    = & \norm{(A^\top D A)^{-1}}^2_\infty \norm{A^\top D A - A^\top D \hat{A} + A^\top D \hat{A} -  \hat{A}^\top D \hat{A}}_\infty \\ 
    =& \norm{(A^\top D A)^{-1}}^2_\infty \left(\norm{A^\top D (A-\hat{A})}_\infty + \norm{(A-\hat{A})^\top D \hat{A}}_\infty \right) \\ 
    \leq & \norm{(A^\top D A)^{-1}}^2_\infty \left(\norm{A^\top D}_\infty \norm{A-\hat{A}}_\infty + \norm{(A-\hat{A})^\top D}_\infty \norm{\hat{A}}_\infty \right) \\ 
    \leq & \norm{(A^\top D A)^{-1}}^2_\infty \left(\gamma \max_{s,a} \norm{\mc{T}(\cdot \mid s,a)-\hat{\mc{T}}(\cdot \mid s,a)}_1 + (1+\gamma)\gamma \max_{s,a} \norm{\mc{T}(\cdot \mid s,a)-\hat{\mc{T}}(\cdot \mid s,a)}_1 \right) \\ 
    =& \norm{(A^\top D A)^{-1}}^2_\infty \left((2+\gamma)\gamma \max_{s,a}\norm{\mc{T}(\cdot \mid s,a)-\hat{\mc{T}}(\cdot \mid s,a)}_1 \right) 
\end{align} 
where we have used the fact that $\norm{A-\hat{A}}_\infty = \gamma \max_{s,a}\norm{T(\cdot \mid s,a) - \hat{\mc{T}}(\cdot \mid s,a)}_1$ and that
\begin{equation}
    \norm{A}_\infty = \max_{s,a} \sum_{s'} \left\lvert \gamma \mc{T}(s' \mid s,a) - \mathbf{1}(s'=s) \right\rvert \leq  \max_{s,a} \sum_{s' \neq s} \left\lvert \gamma \mc{T}(s' \mid s,a) \right\rvert + 1 \leq 1+\gamma
\end{equation}

Plugging these back into \eqref{eq:intermediate-step} gives 
\begin{equation}
    \begin{split}
        \norm{V^* - \hat{V}}_\infty &\leq \left((1-\gamma) + (1+R_{\max})\right)(2+\gamma)\gamma \norm{(A^\top D A)^{-1}}^2_\infty \max_{s,a}\norm{\mc{T}(\cdot \mid s,a)-\hat{\mc{T}}(\cdot \mid s,a)}_1 \\ 
        & + (1+R_{\max})\gamma \norm{(A^\top D A)^{-1}}_\infty \max_{s,a}\norm{\mc{T}(\cdot \mid s,a)-\hat{\mc{T}}(\cdot \mid s,a)}_1
    \end{split}
\end{equation}
For any $\delta \in [0,1)$, with probability $1-\delta/2$, we have
\begin{equation}
    \max_{s,a}\norm{\mc{T}(\cdot \mid s,a)-\hat{\mc{T}}(\cdot \mid s,a)}_1 \leq \sqrt{\frac{2S}{n} \ln \frac{4SA}{\delta}}
\end{equation}
Then, leveraging our assumption that 
\begin{equation}
\norm{(A^\top DA)^{-1}}_\infty = \frac{1}{\inf_{\norm{x}=1} \norm{(A^\top DA)x}_\infty} \leq \frac{1}{(1-\gamma)^2 D_{\min}}
\end{equation}
we have, with probability $1-\delta$, 
\begin{align}
    \norm{V^* - \hat{V}}_\infty &\leq \left(\frac{(2+R_{\max})(2+\gamma)\gamma}{(1-\gamma)^4 D_{\min}^2} + \frac{(1+R_{\max})\gamma}{(1-\gamma)^2D_{\min}} \right)\sqrt{\frac{2S}{n} \ln \frac{4SA}{\delta}} \\ 
     &\leq \left(\frac{2(2+R_{\max})(2+\gamma)\gamma}{(1-\gamma)^4 D_{\min}^2} \right)\sqrt{\frac{2S}{n} \ln \frac{4SA}{\delta}}
\end{align}
\end{proof}

\subsection{Gridworld Experiments}
\label{appendix:tabular-gridworld-experiments}
In this subsection, we provide more experimental details and analysis of the tabular SMODICE experiments shown in Figure \ref{figure:illustrative_example}.

To generate the offline dataset, a random policy (i.e., a policy that chooses each action with equal probabilities) is executed in the MDP for 10000 epsiodes. We use this dataset to compute the approximate MDP. Then, this MDP is used as an input to SMODICE (see Algorithm \ref{alg:smodice-chi-tabular}). The data collection procedure for the offline imitation learning from examples setting is identical.

\para{Offline IL from mismatched experts.}  In this task, we consider an expert agent that can move one grid cell diagonally in any direction, whereas the imitator is only able to move one grid cell horizontally or vertically. The expert policy is shown in black in Figure~\ref{fig:tabular_diagonal}. Using purely an offline dataset collected by a random agent, we compute the closed-form tabular SMODICE
solution \eqref{eq:smodice-tabular-optimal-V} using Algorithm \ref{alg:smodice-chi-tabular} and obtain the zig-zagging policy shown in blue. Indeed, this solution is one of the two correct solutions that minimize the state-occupancy divergence (the other one mirrors this path along the expert demo), while being feasible under the imitator dynamics.

\para{Offline IL from examples.}
We arbitrarily select a state to be the success state denoted by the green star in Figure~\ref{fig:tabular_example}. In this case, the expert's state occupancies is simply a one-hot vector with weight $1$ at the success state. Then, we again use the tabular version of SMODICE to compute the policy whose state occupancies is as close to this one-hot vector as possible; the solution is illustrated in blue. As can be seen, this policy successfully reaches the goal. Furthermore, it is easy to see that in this task, a policy that minimizes state-occupancy divergence to the expert (i.e., the one-hot vector) is one that reaches the goal with the fewest steps. The policy learned by SMODICE is indeed among the set of optimal policies. 

Furthermore, we compute the state occupancies of all states in the gridworld. For the success state, $d(s) \approx 0.915$, whereas the second largest state occupancy is $0.01$. This validates the intuition that $\sum_{s \in \mc{S}^*} d^\pi(s) \gg \sum_{s \notin \mc{S}^*} d^\pi(s)$.

\begin{algorithm}[hb]
\caption{SMODICE with $\chi^2$-divergence for Tabular MDPs}
\label{alg:smodice-chi-tabular}
\definecolor{codeblue}{rgb}{0.28125,0.46875,0.8125}
\lstset{
    basicstyle=\fontsize{9pt}{9pt}\ttfamily\bfseries,
    commentstyle=\fontsize{9pt}{9pt}\color{purple},
    keywordstyle=
}
\begin{lstlisting}[language=python]
# d_E: the expert state occupancies, |S| 
# mdp: the empirical MDP learned using offline data
# pi_b: the behavior policy, |S||A|

def SMODICE(mdp, d_E, pi_b):
    d_O_sa = compute_policy_occupancies(mdp, pi_b)  # |S||A|
    d_O = d_O_sa.reshape(mdp.S, mdp.A).sum(axis=1) # |S|

    # compute reward function 
    R = np.log(d_E/d_O) # |S|
    
    # define and reshape matrices
    T = mdp.T.reshape(mdp.S * mdp.A, mdp.S)  # |S||A| x |S|
    B = np.repeat(np.eye(mdp.S), mdp.A, axis=0)  # |S||A| x |S|
    I = np.ones(mdp.S * mdp.A) # |S||A|
    D = np.diag(d_O_sa)  # |S||A| x |S||A|
    
    # compute optimal V
    H = (mdp.gamma * P - B).T @ D @ (mdp.gamma * T - B) # |S| x |S|
    y = -((1 - mdp.gamma) * p0 + (mdp.gamma * P - B).T @ D @ (I + B @ R)) # |S|
    V_star = np.linalg.pinv(H) @ y # |S| 

    # compute optimal occupancy ratios
    xi_star = B @ R + (mdp.gamma * P - B) @ V_star + 1 # |S||A|
    m = np.array(xi_star >= 0, dtype=np.float) 
    xi_star = xi_star * m 

    # weighted BC
    pi_star = (xi_star * d_O).reshape(mdp.S, mdp.A) # |S||A|
    pi_star /= np.sum(pi_star, axis=1, keepdims=True)

    f_divergence = d.dot(0.5 * (w_star ** 2))

    return pi_star, f_divergence, V_star
\end{lstlisting}
\end{algorithm}

\section{SMODICE with Deep Neural Networks} 
\label{appendix:deep-smodice}
For high-dimensional MDP with continuous state and action spaces, we instantiate SMODICE using deep neural networks. In particular, we parameterize $V_\theta$ and $\pi_\phi$ using DNNs with weights $\theta$ and $\phi$, respectively. 

\begin{algorithm}[t!]\label{algo:MainAlgorithm}
\caption{SMODICE for Continuous MDPs}\label{alg:smodice-deep}
\begin{algorithmic}[1]
\small
\STATE \textbf{Require}: Expert demonstration(s) $\mc{D}^E$, offline dataset $\mc{D}^O$, choice of $f$-divergence $f$
\STATE Randomly initialize discriminator $c_\psi$, value function $V_\theta$, and policy $\pi_\phi$. 
\STATE \textcolor{purple}{\texttt{// Train Expert (resp. Example) Discriminator}}
\STATE Train $c_\psi$ using $\mc{D}^E$ and $\mc{D}^O$ using Equation \eqref{eq:adversarial-training}

\STATE \textcolor{purple}{\texttt{// Train Lagrangian Value Function}}
\FOR{\text{number of iterations}}
\STATE Sample minibatch of offline data $\{s_t^i, a_t^i, s_{t+1}^i\}_{i=1}^N \sim \mc{D}^O, \{s_0^i\}_{i=1}^M\sim \mc{D}^O(\mu_0)$
\STATE Obtain reward: $R_i = c_\theta(s^i_t), i=1,...,N$
\STATE Compute value objective $\mc{L}(\theta) \coloneqq (1-\gamma) \frac{1}{M}\sum_{i=1}^M V_\theta(s_0^i) + \frac{1}{N} f_\star\left(R^i + \gamma V(s_{t+1}^i)-V(s^i_t)\right)$
\STATE Update $V_\theta$ using SGD: $V_\theta \leftarrow V_\theta - \eta_V \nabla \mc{L}(\theta)$
\ENDFOR

\STATE \textcolor{purple}{\texttt{// Policy Learning}}

\FOR{\text{number of iterations}}
\STATE Sample minibatch of offline data $\{s_t^i, a_t^i, s_{t+1}^i\}_{i=1}^N \sim \mc{D}^O$
\STATE \textcolor{purple}{\texttt{// Compute Optimal Importance Weights}}
\STATE Compute $\xi^*(s^i,a^i) = f'_\star\left(R(s^i) + \gamma V(s_{t+1}^i) - V(s_t^i) \right), i=1,...,N$
\STATE \textcolor{purple}{\texttt{// Weighted Behavior Cloning}}
\STATE Update $\pi_\psi$ using Equation \eqref{eq:weighted-bc-objective}
\ENDFOR
\end{algorithmic}
\end{algorithm}

\para{Remark.} We note that the sample-based estimation of Equation \eqref{eq:smodice-dual-lagrangian-fenchel} (Line 9) is biased because $\mc{T}V$ is itself an expectation that is inside a (non-linear) convex function $f$~\cite{baird1995residual}; however, as in several prior works~\cite{nachum2019algaedice, nachum2020reinforcement, lee2021optidice}, we do not find this biased estimate to impact empirical performance and keep it for simplicity.

\subsection{Hyperparameters and Architecture}
We use the same hyperparameters for all SMODICE experiments in this paper modulo the choice of $f$-divergences (explained in the next section). In terms of architecture, we use a simple 2-layer ReLU network with hidden size 256 to parameterize the value network. For the policy network, we use the same architecture to parameterize a Gaussian output distribution; the mean and the log standard deviation are ouputs of two separate heads. In addition, we use an tanh function on the Gaussian samples to enforce bounded actions, as in~\cite{haarnoja2018soft}. The discriminator uses the same architecture. Table \ref{table:smodice_hyp} summarizes the hyperparameters as well as the architecture.

\begin{table}[ht]
\centering
\caption{SMODICE Hyperparameters.}\label{table:smodice_hyp}
\begin{tabular}{cll}
\toprule
& Hyperparameter & Value \\
\midrule
SMODICE Hyperparameters & Optimizer & Adam~\cite{kingma2014adam} \\
                                      & Critic learning rate & 3e-4 \\
                                      & Discriminator learning rate & 3e-4 \\
                                      & Actor learning rate  & 3e-5 \\
                                      & Mini-batch size      & 256 \\
                                      & Discount factor      & 0.99 \\
                                      & Actor Mean Clipping & (-7.24, 7.24) \\
                                      & Actor Log Std. Clipping & (-5,2) \\
\midrule
Architecture    
                                      & Discriminator hidden dim     & 256        \\
                                      & Discriminator hidden layers  & 2          \\
                                      & Discriminator activation function & Tanh \\
                                      & Critic hidden dim    & 256        \\
                                      & Critic hidden layers & 2          \\
                                      & Critic activation function & ReLU \\
                                      & Actor hidden dim     & 256        \\
                                      & Actor hidden layers  & 2          \\
                                      & Actor activation function & ReLU \\
\bottomrule
\end{tabular}
\end{table} %

\subsection{Choosing $f$-Divergence in Practice}
\label{appendix:smodice-choosing-f-divergence}
In our experiments, SMODICE is implemented using $\chi^2$-divergence for all tasks except Hopper, Walker2d, and HalfCheetah. Here, we show that a suitable choice of $f$-divergence can be chosen \textit{offline} by observing the initial direction of the SMODICE policy loss on the offline dataset. More specifically, on the environments in which SMODICE exhibited largest performance discrepancies between using KL-divergence or $\chi^2$-divergence, we have found that SMODICE returns are \textit{negatively} correlated with the policy loss. As shown in Figure \ref{figure:smodice-choosing-f-divergence}, the poor performing variant of SMODICE always has a policy loss that initially jumps and vice-versa. This makes intuitive sense given the composition of the offline datasets, which is a mix of small amount of expert data with a large amount of poor quality data (see Appendix \ref{appendix:oilo} for more details). When SMODICE fails to pick out the expert data, which is often narrowly distributed, then it must have assigned relatively higher importance weights to the lower quality data, which is more diverse. This creates a more difficult supervised learning task, leading to higher training loss for the policy. Therefore, in practice, we recommend monitoring SMODICE's initial policy loss direction to determine whether the current $f$-divergence will lead to good performance and make changes accordingly. 

\begin{figure}[t!]
\includegraphics[width=\textwidth]{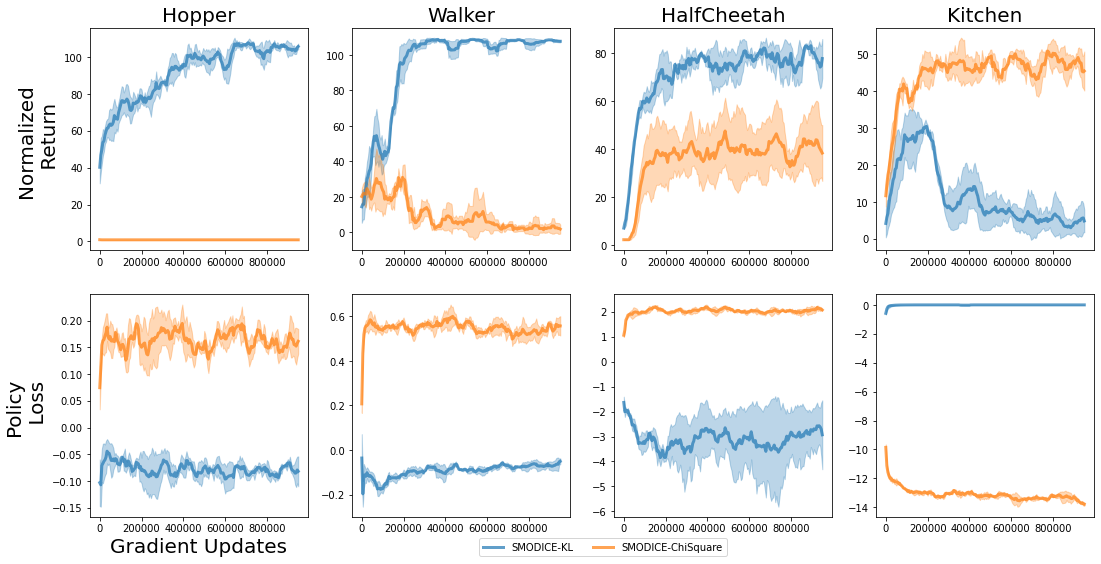}
\caption{SMODICE returns are negatively correlated with the direction of its policy losses.}
\label{figure:smodice-choosing-f-divergence}
\end{figure}

\section{Baselines}

\para{TD3-BC.} Many of our baselines are implemented using TD3-BC as their offline policy optimizer. We use the default hyperparameters for TD3-BC provided by~\citet{fujimoto2021minimalist}, shown in Table \ref{table:td3_hyp}. 

\para{Implementation Details.} We use the official PyTorch implementation of TD3-BC, publicly available at \url{https://github.com/sfujim/TD3_BC}. For DEMODICE, because the code is not public available, we implement it using PyTorch, adapting from \url{https://github.com/secury/optidice}; we use the hyperparameters reported in the paper. Note that DEMODICE shares many architectures with SMODICE. For example, DEMODICE uses a state-action discriminator, and we implement it by simply changing the input space of the state-based discriminator used in our SMODICE implementation. For SAIL, we use the official PyTorch implementation (\url{https://github.com/FangchenLiu/SAIL}) and combine it with TD3-BC. We implement RCE using PyTorch, adapting from the official TensorFlow implementation \url{https://github.com/google-research/google-research/tree/master/rce}.

\begin{table}[ht]
\centering
\caption{TD3+BC Hyperparameters. This table is reproduced from~\citet{fujimoto2021minimalist} directly.}\label{table:td3_hyp}
\begin{tabular}{cll}
\toprule
& Hyperparameter & Value \\
\midrule
TD3 Hyperparameters & Optimizer & Adam~\citep{kingma2014adam} \\
                                      & Critic learning rate & 3e-4 \\
                                      & Actor learning rate  & 3e-4 \\
                                      & Mini-batch size      & 256 \\
                                      & Discount factor      & 0.99 \\
                                      & Target update rate   & 5e-3 \\
                                      & Policy noise         & 0.2 \\
                                      & Policy noise clipping & (-0.5, 0.5) \\
                                      & Policy update frequency & 2 \\
\midrule
Architecture         & Critic hidden dim    & 256        \\
                                      & Critic hidden layers & 2          \\
                                      & Critic activation function & ReLU \\
                                      & Actor hidden dim     & 256        \\
                                      & Actor hidden layers  & 2          \\
                                      & Actor activation function & ReLU \\
\midrule
TD3+BC Hyperparameters  & $\alpha$             & 2.5 \\
\bottomrule
\end{tabular}
\end{table} %

\section{Offline IL from Observations Experimental Details}
\label{appendix:oilo}
\subsection{Datasets}
For Hopper, Walker2d, HalfCheetah, Ant, and AntMaze, we construct the offline datasets by combining a small amount of expert data and a large amount of low quality random data. For the first four tasks, we leverage the respective ``expert-v2'' and ``random-v2'' datasets in the D4RL benchmark. For AntMaze, we use trajectories from ``antmaze-umaze-v2'' as the expert data; for the random data, we simulate the antmaze environment for 1M steps using random actions and take the resulting transitions. For the kitchen environment, we use the full ``kitchen-mixed-v0'' dataset as the offline dataset without further augmentation. See Table \ref{table:d4rl-datasets} for dataset breakdown. 

\begin{table}[]
\centering 
\caption{Offline Dataset Compositions.}
\resizebox{0.7\columnwidth}{!}{
\begin{tabular}{|c|c|c|c|c|}
\hline
Task  & State Dim & Expert Dataset & Expert Data Size  & Random Data Size \\ \hline
Hopper & 11 & hopper-expert-v2 & 193430 & 999999 \\
Walker2d & 17 & walker2d-expert-v2 & 99900 & 999999 \\ 
HalfCheetah & 17 & halfcheetah-expert-v2 & 199800 & 999000 \\
Ant & 27 & ant-expert-v2 & 192409 & 999427 \\ 
AntMaze & 29 & antmaze-umaze-v2 & 349687 & 999000 \\ 
Kitchen & 60 & kitchen-mixed-v0 & 136937 & 0 \\ \hline 
\end{tabular}}
\label{table:d4rl-datasets}
\end{table}

\subsection{Additional Results}
\label{appendix:oilo-additional-results}
In this section, we present some additional results as well as ablation experiments. 

\para{Diverse AntMaze.} In Section \ref{section:offline-il-observations}, we have found that two of the baselines (SAIL-TD3-BC and ORIL) outperform SMODICE on the AntMaze benchmark. To investigate their sources of empirical gain, we have designed a diverse version of the AntMaze dataset to test how different approaches are robust to the dataset composition on the same task. To this end, we take the AntMaze offline dataset (explained above) and reverse half of the trajectories in their directions. In other words, these reversed trajectories would navigate from the original goal to the initial state. This procedure is easy to do because the U-shaped maze is symmetric. Then, using this dataset, we have trained all approaches in Section \ref{section:offline-il-observations} again. As shown in Figure \ref{figure:oil-ablations}(a), on this dataset, both SAIL-TD3-BC and ORIL quickly collapse, indicating that these methods are very brittle to the dataset composition. In contrast, SMODICE remains the best performing algorithm, despite overall drop in all methods' performances. 

\para{SMODICE with Zero Reward.} 
We compare SMODICE with SMODICE-Zero, which simply assigns every transition zero reward (i.e., $R(s) = 0$) regardless of its similarity to an expert state. Then, we compare the ratio of the importance weights (i.e., $\xi(s,a)$) assigned to the offline expert data and the offline random data by the two SMODICE methods, respectively. As shown in Figure \ref{figure:smodice-weight-ratio}, SMODICE assigns much higher relative weights to the expert data and consequently significantly outperforms SMODICE-Zero. These results demonstrate that SMODICE's empirical performance comes from its superior ability to discriminate the offline expert data, which is a by-product of its optimization procedure. 

\para{DEMODICE with State-Based Discriminator.}

We replace DEMODICE's state-action based discriminator with a state-based one to make it compatible with the problem settings we consider in this paper. 
We compare this version of DEMODICE (\textbf{DEMODICE+SD}) to SMODICE in Table~\ref{table:smodice-vs-demodice}, showing performance at convergence. SMODICE significantly outperforms DEMODICE+SD, which suffers from training instability due to optimizing the KL conjugate. Thus, naively adapting DEMODICE to state matching is insufficient; our generalized $f$-divergence based algorithm is crucial for enabling learning from challenging expert observations (e.g., mismatched dynamics or examples).

\begin{table}[h]
\centering 
\vspace{0.1cm}
\caption{SMODICE vs. DEMODICE with State-Discriminator}
\resizebox{0.7\columnwidth}{!}{
\begin{tabular}{|c|c|c|c|c|c|}
\hline
Algorithm  & AntMaze-PointMass &AntMaze-Example & PointMass-4D    & Kettle & Microwave \\ \hline  
DEMODICE+SD  & 19.8 & 32.7 & 0.0 & 0.0 & 0.1    \\ \hline
SMODICE  & \textbf{34.3}& \textbf{47.3} & \textbf{80.0} & \textbf{100.0} & \textbf{60.3} \\ \hline
\end{tabular}}
\vspace{-5pt}
\label{table:smodice-vs-demodice}
\end{table}

\begin{figure}[t!]
\includegraphics[width=\textwidth]{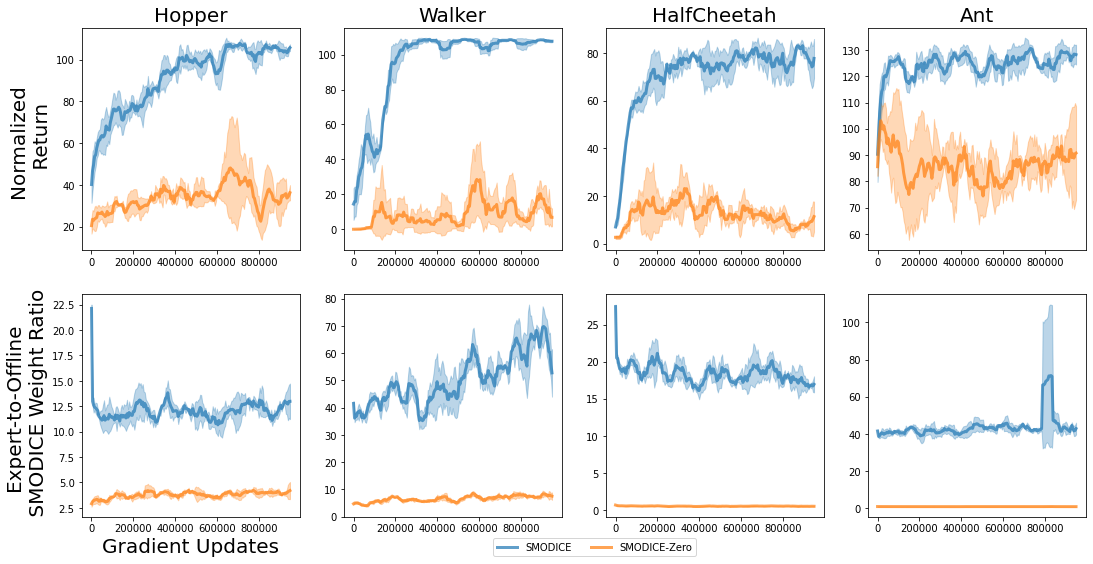}
\caption{SMODICE vs. SMODICE-Zero. Using the discriminator-based reward, SMODICE assigns much higher weights to expert-quality data.}
\label{figure:smodice-weight-ratio}
\end{figure}

\begin{figure}[t!]
\centering
\includegraphics[width=0.33\textwidth]{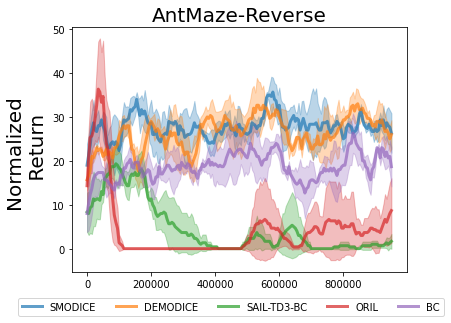}
\caption{Offline imitation learning results on AntMaze-Reverse. SMODICE is still among the best performing methods, while both SAIL-TD3-BC and ORIL collapse, demonstrating their sensitivity to the offline dataset composition.}
\label{figure:oil-ablations}
\end{figure}

\section{Offline IL from mismatched Expert Experimental Details}
\label{appendix:oilhe}
\subsection{Continuous Control Experiments} 
\para{Mismatched Experts.} The mismatched experts are illustrated in Figure \ref{figure:mismatched-expert}.

\begin{figure}[t!]
\centering
\resizebox{\textwidth}{!}{
\subfigure[HalfCheetah-Short]{\includegraphics[width=0.33\textwidth]{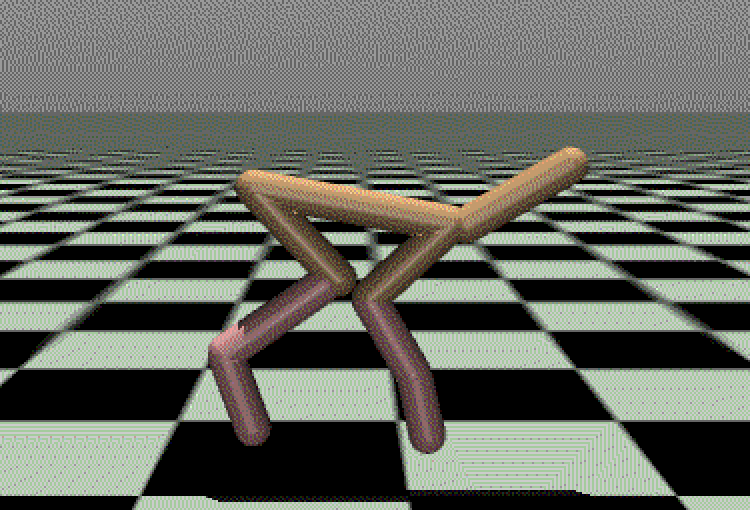}}
\subfigure[Ant-Disabled]{\includegraphics[width=0.33\textwidth]{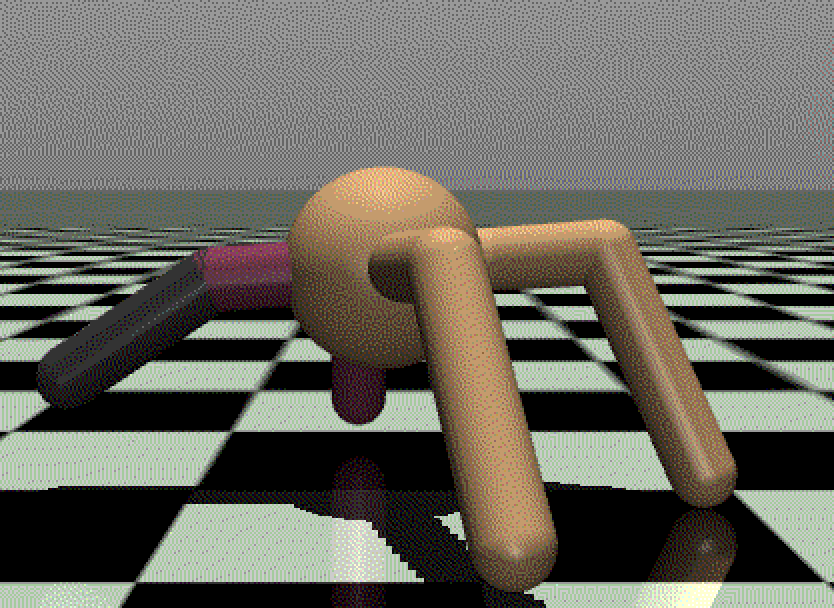}}
\subfigure[PointMass-Maze]{\includegraphics[width=0.33\textwidth]{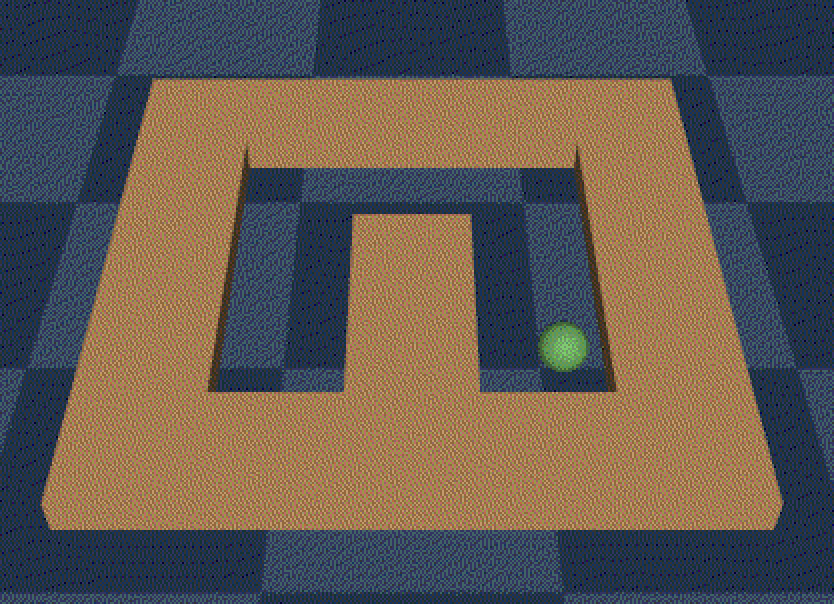}}}
\caption{Illustrations of the mismatched experts.}
\label{figure:mismatched-expert}
\end{figure}

\para{Comparison between PointMass and Ant experts for AntMaze.}
The trajectories of PointMass and Ant experts are illustrated in Figure \ref{figure:antmaze-expert-vis}. As can be seen, the PointMass trajectory is more regular and smooth due to its simpler dynamics and the use of a waypoint controller. In contrast, the ant trajectory is much less well-behaved because solving the maze task using the Ant agent is intrinsically a difficult task; consequently, it is difficult to provide an Ant demonstration. This example serves as a strong motivating problem for offline imitation learning with mismatched experts.

\begin{figure}[t!]
\centering
\subfigure[PointMass Expert]{\includegraphics[width=0.33\textwidth]{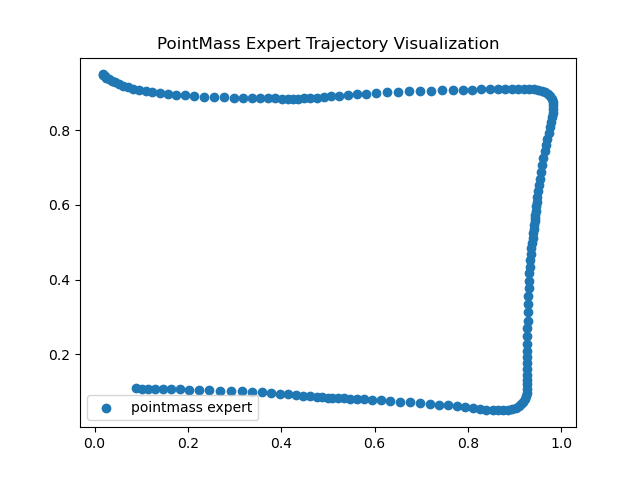}}
\subfigure[Ant Expert]{\includegraphics[width=0.33\textwidth]{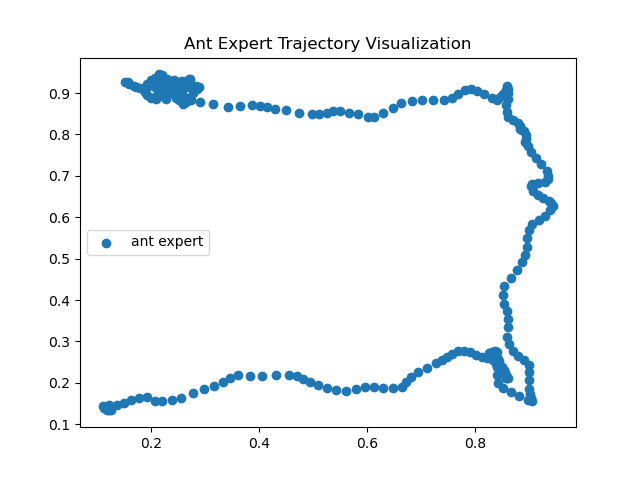}}
\caption{Trajectory visualizations of AntMaze experts.}
\label{figure:antmaze-expert-vis}
\end{figure}

\subsection{Quantitative Analysis of Figure \ref{figure:offline-il-observations-mismatched}}
\label{appendix:quantitative-analysis-oil-mismatched}
We quantitatively measure the percentage drop-in-performance for each method in Figure \ref{figure:offline-il-observations-mismatched}, computed as $\frac{|\text{max\_original}-\text{max\_mismatched}|}{\text{max\_original}}$. Note that this metric favors the baselines as taking the maximum value advantages methods that are more unstable. Nevertheless, as shown in Table \ref{table:oil-mismatched}, SMODICE is still by far the most robust method overall and in each individual task. As expected, ORIL does the worst as it is not designed to handle mismatched dynamics; this shows that using a state-based discriminator in itself is not sufficient.

\begin{table}[]
\centering 
\caption{Relative performance drop with mismatched experts.}
\resizebox{0.5\columnwidth}{!}{
\begin{tabular}{|c|c|c|c|c|}
\hline
Algorithm   & HalfCheetah & Ant    & AntMaze    & Average    \\ \hline
SMODICE     & \textbf{70.7\%}  & \textbf{3.3\%}  & \textbf{29.7\%} & \textbf{34.5\%} \\ \hline
SAIL-TD3-BC & 88.9\%      & 6.8\%  & 50.6\%     & 48.8\%     \\ \hline
ORIL        & 91.8\%      & 42.2\% & 72.7\%     & 68.9\%     \\ \hline
\end{tabular}}
\label{table:oil-mismatched}
\end{table}

\section{Offline IL from Examples Experimental Details} 
\label{appendix:oile} 
\subsection{Datasets}
We collect 300 success-state examples for each of the tasks. The examples are randomly sampled from the subset of the offline dataset that achieves the task. Task success is verfied through a pre-defined sparse reward function (e.g., distance threshold function). 

\subsection{Environments.}
\para{PointMass-4Direction.} This environment is adapted from the ``maze2d-umaze-v0'' environment in D4RL by changing the map configuration. The environment termination condition is triggered when the agent successfully comes within a small radius of the specified goal. 

\para{AntMaze-Example.} This environment is identical to the environments used in previous two settings.

\para{Kettle and Microwave.} These environments are adapted from the ''kitchen-mixed-v0'' environment in D4RL. The environments are identical as the original except the termination conditions. Both of these tasks terminate when the Franka robot places the specified object within a small radius of the desired configuration. 

\subsection{Examples of Success States}
All success states are extracted from the offline dataset used for policy training. We illustrate one representative example from each task in Figure \ref{figure:success-states}. 

\begin{figure}[t!]
\centering
\resizebox{\textwidth}{!}{
\subfigure[PointMass-4Direction]{\includegraphics[width=0.24\textwidth]{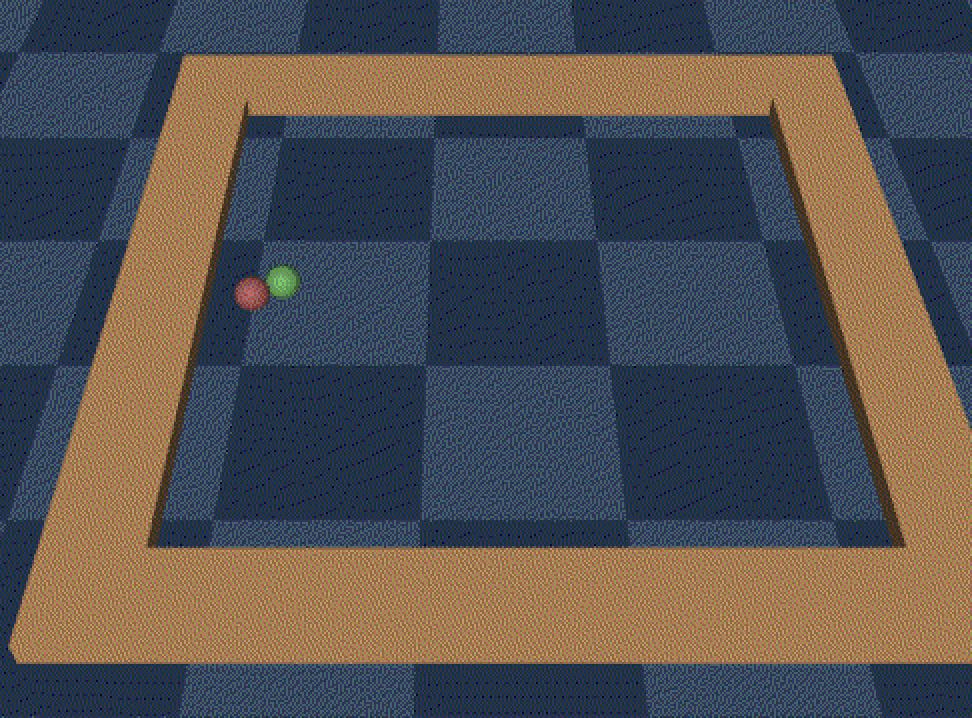}}
\subfigure[AntMaze]{\includegraphics[width=0.24\textwidth]{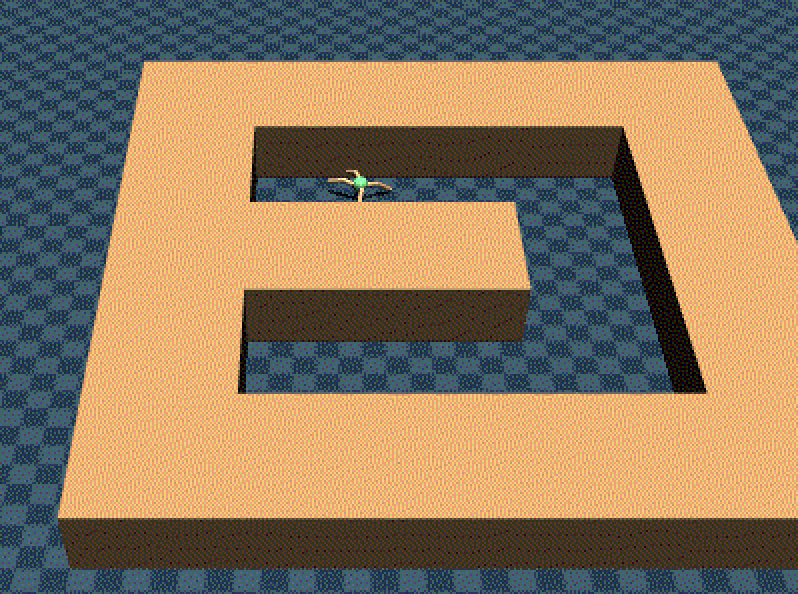}}
\subfigure[Kettle]{\includegraphics[width=0.24\textwidth]{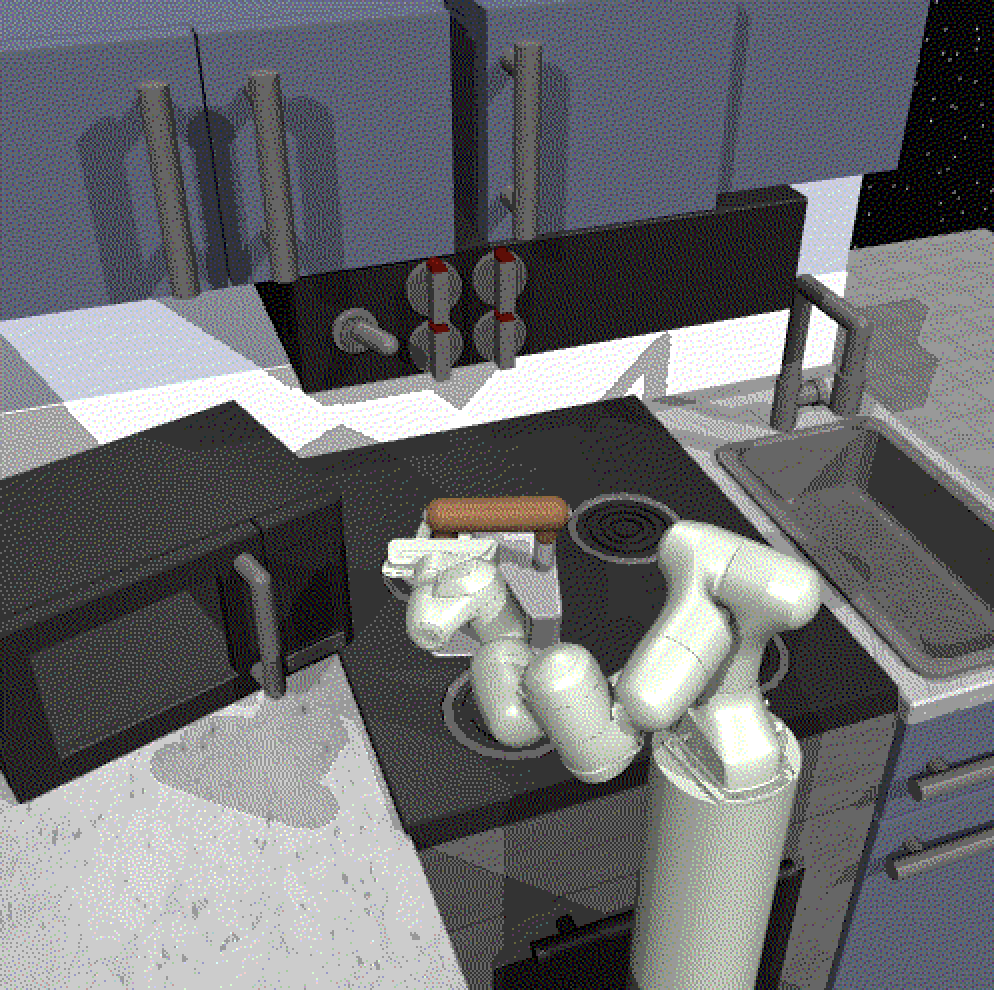}}
\subfigure[Microwave]{\includegraphics[width=0.24\textwidth]{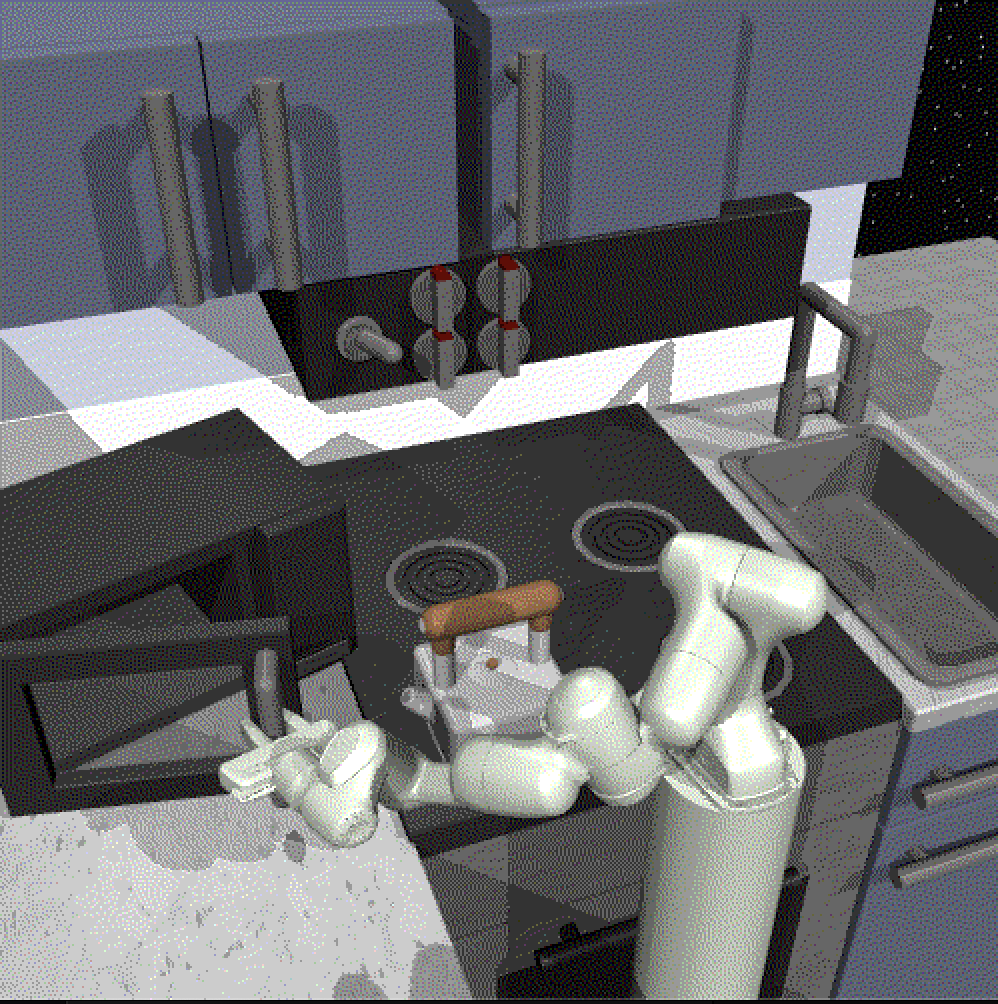}}}
\caption{Illustrations of success examples.}
\label{figure:success-states}
\end{figure}

\end{document}